\documentclass{article}

\usepackage[preprint]{neurips_2026}

\usepackage[utf8]{inputenc} % allow utf-8 input
\usepackage[T1]{fontenc}    % use 8-bit T1 fonts
\usepackage{hyperref}       % hyperlinks
\usepackage{url}            % simple URL typesetting
\usepackage{booktabs}       % professional-quality tables
\usepackage{amsfonts}       % blackboard math symbols
\usepackage{nicefrac}       % compact symbols for 1/2, etc.
\usepackage{microtype}      % microtypography
\usepackage{xcolor}         % colors
\usepackage{xspace}
\usepackage{enumitem}
\usepackage[pdftex]{graphicx}
\usepackage{subcaption}
\usepackage{amsmath}
\usepackage[capitalize,noabbrev]{cleveref}
\usepackage{multirow}
\usepackage{tcolorbox}
\usepackage{adjustbox}
\usepackage{makecell}
\definecolor{lightred}{RGB}{255,230,230}
\usepackage{colortbl}

\usepackage{amsthm}
\theoremstyle{plain}
\newtheorem{theorem}{Theorem}[section]
\newtheorem{proposition}[theorem]{Proposition}

\theoremstyle{definition}
\newtheorem{definition}[theorem]{Definition}
\newtheorem{assumption}[theorem]{Assumption}
\theoremstyle{remark}

\newcommand{\sys}{\texttt{Quant.npu}\xspace}

\title{Quant.npu: Enabling Efficient Mobile NPU Inference for on-device LLMs via Fully Static Quantization}

\author{%
  \textbf{Jinghe Zhang, Daliang Xu\thanks{Corresponding author}, Chenghua Wang, Weikai Xie, Tao Qi,} \\
  \textbf{Yun Ma, Mengwei Xu, Gang Huang} \\
}

\begin{document}

\maketitle

\begin{abstract}
  Large language models (LLMs) are increasingly deployed on mobile devices, where Neural Processing Units (NPUs) necessitate fully static quantization for optimal inference efficiency. However, existing post-training quantization (PTQ) methods predominantly rely on dynamic activation quantization, rendering them incompatible with NPU hardware constraints. To bridge the gap between high-fidelity PTQ and NPU-constrained inference, we propose \sys, a integer-only fully static quantization framework. It incorporates learnable quantization parameters and rotation matrices, enabling low-bit activation-weight quantization without runtime quantization parameters re-computation. Crucially, we identify that initialization and selective optimization of quantization parameters is pivotal for optimization stability, as improper initialization and naive joint optimization induce gradient instability that disrupts the optimization of rotation matrices. To address this, we propose a rotation-and-bit-width-aware initialization tailored to diverse activation profiles and a distribution-aware selective optimization (two-stage quantization pipeline) tailored to rotated and unrotated tensors. Furthermore, we introduce a sensitivity-guided adaptive mixed-precision scheme to balance accuracy with inference efficiency.  Extensive experiments on real-world mobile NPUs demonstrate that \sys achieves comparable accuracy to state-of-the-art methods, while reducing inference latency by up to 15.1\%.
\end{abstract}

\section{Introduction} 
On-device LLM inference has emerged as a crucial research direction, primarily driven by its inherent advantages in safeguarding data privacy and enabling low-latency, network-independent operations~\cite{shafee2025privacy, pamadi2025edge}. 
To enable the energy-efficient execution on mobile devices, manufacturers integrate Neural Processing Units (NPUs), an LLM-specialized hardware into their System-on-Chips (SoCs). 
For instance, Qualcomm's Hexagon NPU features dedicated matrix units for high-throughput GEMM, vector units for element-wise processing, and specialized direct memory access (DMA) engines to maximize memory bandwidth utilization. These architectural enhancements deliver more than 10$\times$ performance and 4$\times$ energy efficiency than mobile GPUs~\cite{xu2024fastondevicellminference}.

These efficiency gains, however, impose specific architectural constraints:
(i) Preference for integer arithmetic: NPUs prioritize high-throughput integer matrix multiplication due to the superior area and energy efficiency of integer units compared to floating-point counterparts~\cite{xu2024fastondevicellminference}.
(ii) Requirement for static quantization: NPUs are architected for static quantization~\cite{qualcomm_applyencodings_2026} to avoid the high computational overhead of dynamic reduction operations (e.g., on-the-fly min/max computation). For instance, performing a tree reduction on a 128-element INT8 vector requires at least 7 vector instruction cycles, which creates a bottleneck.
(iii) Inefficiency of fine-grained quantization: NPUs generally rely on systolic array-based cores, favoring coarse-grained quantization~\cite{qualcomm_applyencodings_2026}. Fine-grained methods (e.g., per-block) incur frequent de-quantization overhead, disrupting the pipeline and reducing throughput. On Hexagon NPUs, standard W4A8 per-tensor inference is $\approx$20\% faster than W4A16 per-block.

However, prevailing quantization paradigms~\cite{frantar2023gptqaccurateposttrainingquantization,xiao2024smoothquantaccurateefficientposttraining,liu2025spinquantllmquantizationlearned, sun2025flatquantflatnessmattersllm}, which are predominantly tailored for \textit{GPUs and dynamic quantization}, fundamentally struggle to align with mobile NPUs. While Post-Training Quantization (PTQ) methods attempt to mitigate accuracy loss through second-order optimization~\cite{frantar2023gptqaccurateposttrainingquantization} and channel-wise outlier suppression~\cite{xiao2024smoothquantaccurateefficientposttraining}, their fully static variants remain highly vulnerable to the extreme activation outliers inherent in LLMs, leading to severe accuracy degradation. To address this, recent rotation-based approaches~\cite{liu2025spinquantllmquantizationlearned, sun2025flatquantflatnessmattersllm} employ learnable rotations to smooth activation distributions and mitigate the impact of outliers. However, these methods typically assume support for dynamic activation quantization during deployment and use dynamic quantization simulation during optimization. This assumption conflicts with the static compilation constraints of mobile NPUs, often causing severe accuracy collapse when models are converted to a static setting.

To address this, this paper integrates fully static quantization with learnable quantization parameters and rotation matrices, bridging the gap between optimization and deployment to prevent performance degradation. However, achieving this is non-trivial. The primary distinction between static and dynamic quantization lies in the accuracy of quantization parameter estimation. Inappropriate settings introduce significant quantization error, causing the optimization process to converge slowly or fail. Our preliminary experiments (~\cref{sec: init motivation}) confirm that these settings critically impact convergence stability: (i) Initialization of activation quantization parameters: The initial values determines the clipping range. Suboptimal initialization leads to either excessive clipping distortion (if the range is too narrow) or wasted bit-width resolution (if the range is too wide), creating a poor optimization landscape and insufficient convergence. (ii) Selectivity in learnable quantization parameters: We observe that jointly optimizing the quantization parameters for every tensor can be counterproductive.

\textbf{Our solution.} To address these challenges, we propose \sys, an algorithm-system co-designed system with full static quantization, tailored for practically efficient on-NPU LLM inference.
It incorporates three key techniques:
(i) Rotation-and-bit-width-aware Initialization: We demonstrate that the convergence accuracy of typical initialization strategies (e.g., Max-Min vs. Mean) varies depending on the target bit-width and the presence of rotations. Our method adapts the initialization accordingly to ensure a stable starting point.
(ii) Distribution-Aware Selective Optimization (two-stage quantization pipeline): We effectively decouple the optimization of rotated and unrotated distributions. Since unrotated distributions are harder to optimize and increase complexity, we treat them distinctly. 
(iii) Layer-wise Adaptive Mixed-Precision Strategy: Guided by a quantization sensitivity metric, we selectively assign higher bit-widths (e.g., 16-bit) to sensitive components. This allows \sys to employ high precision where necessary, reducing inference latency overhead.

We evaluate \sys on a real mobile NPU (Qualcomm SM8650) using 4 mobile-sized (1--3B) LLMs across 3 real scenario and 6 accuracy datasets. To ensure zero inference overhead, we introduce only two rotation matrices ($R_1, R_2$) that are fused into weight matrices offline, as shown in ~\cref{fig:framework}. To fully leverage hardware capabilities, \sys employs per-tensor quantization for activations and per-channel quantization for weights.
At identical inference latency, our method consistently achieves higher accuracy on downstream tasks than existing approaches. Furthermore, compared to state-of-the-art approaches, which are up to 15.1\% slower than our method, \sys incurs only a marginal performance gap, with an average accuracy drop of 2.58\% and a PPL increase of 1.23.

Our main contributions are summarized as follows:
\begin{itemize}[nosep, leftmargin=1.5em]
    \item \textbf{Identification:} We find that the optimization stability of fully static quantization is highly sensitive to the initialization and the selection of learnable quantization parameters.
    \item \textbf{Framework:} We propose a mobile NPU-friendly quantization framework that incorporates novel rotation-and-bit-width-aware initialization and distribution-aware selective optimization to effectively align optimization process and deployment under static NPU constraints.
    \item \textbf{Performance:} Our method achieves comparable SOTA accuracy with up to 15.1\% lower latency.
\end{itemize}

\begin{figure*}[t]
  \centering
  \begin{subfigure}[t]{0.32\linewidth}
    \centering
    \includegraphics[width=\linewidth]{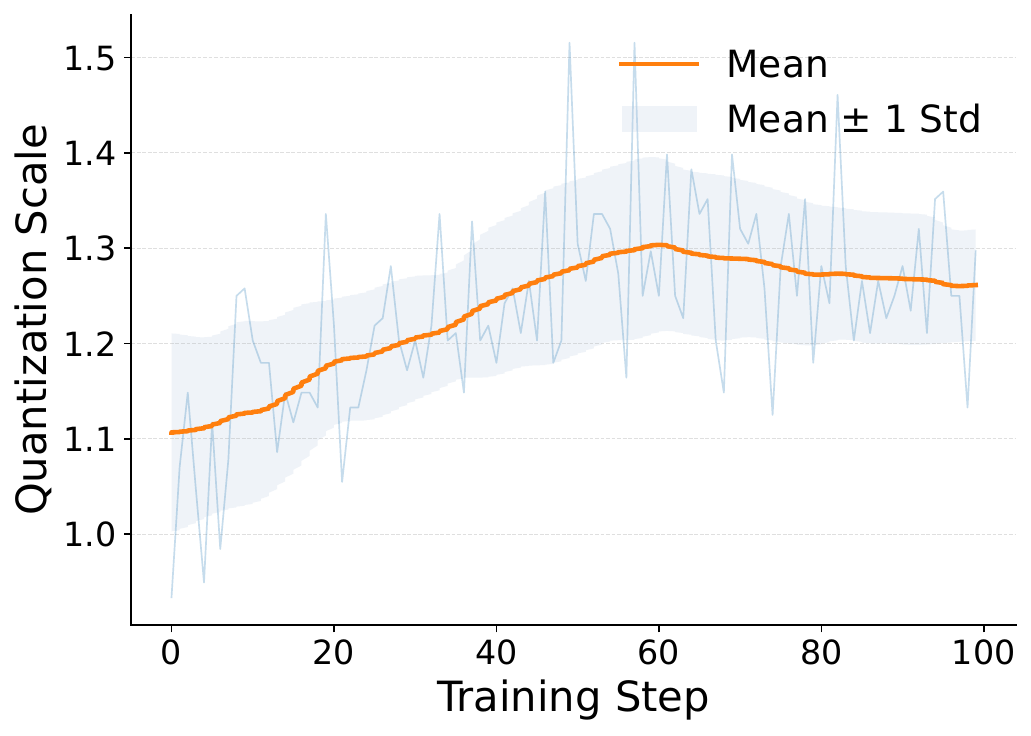}
    \caption{Fluctuation of dynamic activation scale for $W_o$ of the $21^{th}$ layer.}
    \label{fig:a}
  \end{subfigure}
  \hfill
  \begin{subfigure}[t]{0.32\linewidth}
    \centering
    \includegraphics[width=\linewidth]{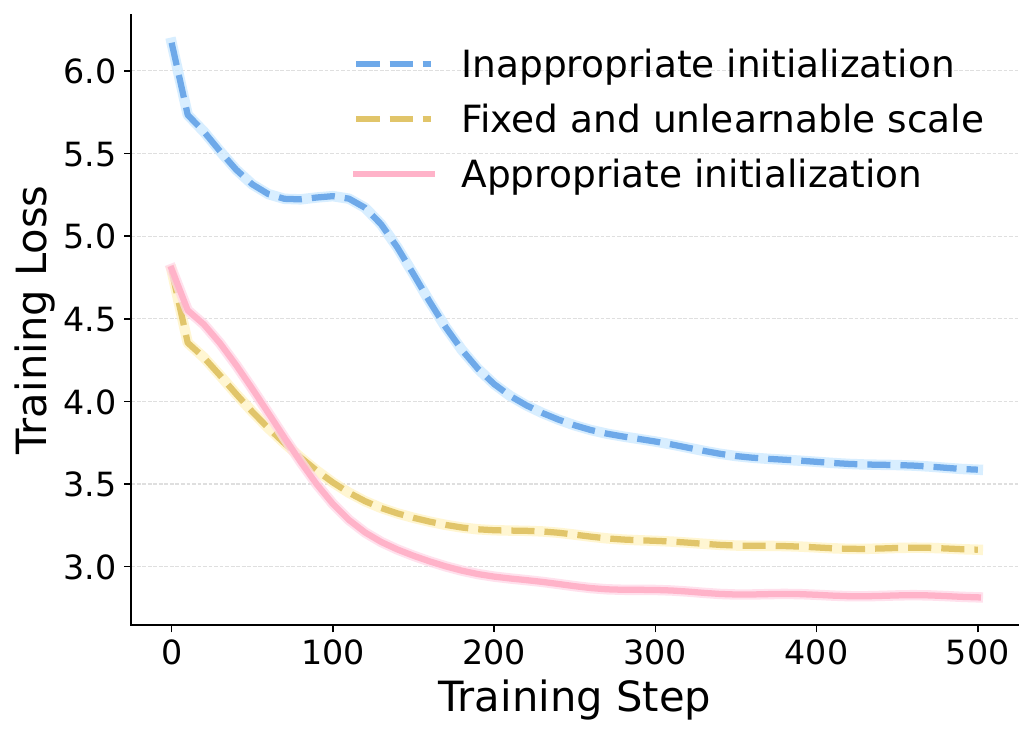}
    \caption{Loss convergence curves under different optimization settings.}
    \label{fig:b}
  \end{subfigure}
  \hfill
  \begin{subfigure}[t]{0.32\linewidth}
    \centering
    \includegraphics[width=\linewidth]{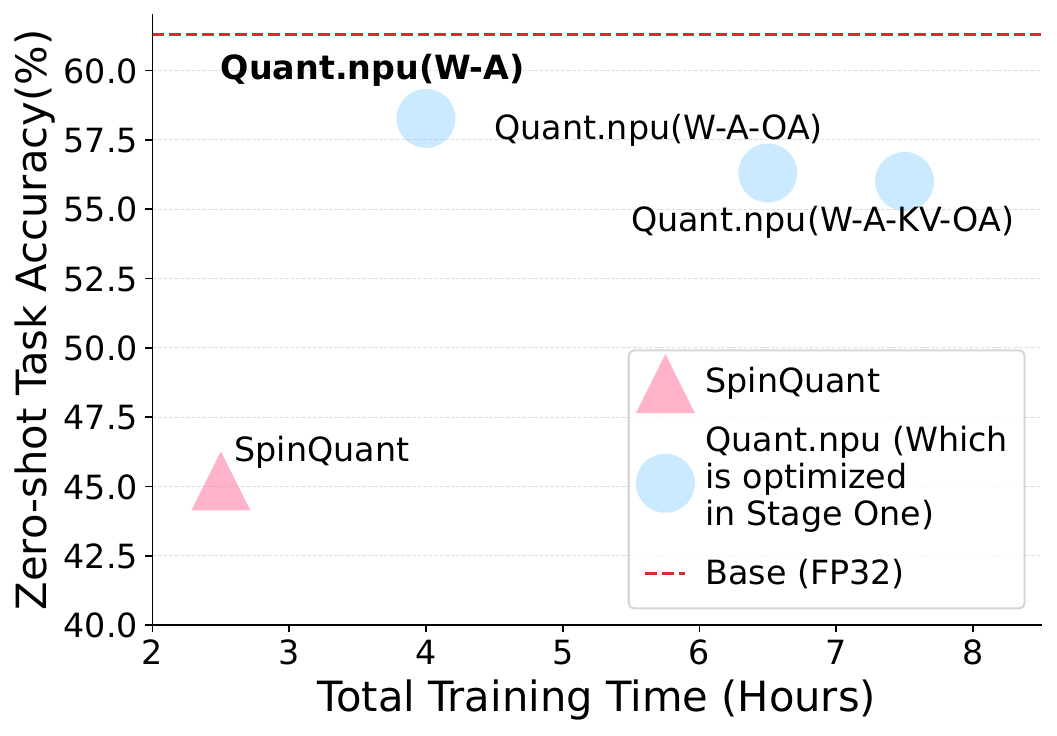}
    \caption{Different components in Stage One impacts on accuracy and time.}
    \label{fig:c}
  \end{subfigure}

  \caption{The influence of scale on training. \cref{fig:a} shows large fluctuations in dynamic scale during optimization. \cref{fig:b} indicates that the learnability and initialization of quantization parameters are crucial for convergence. In \cref{fig:c}, "A", "OA", and "W" denote the input activation, output activation, and weights of the linear layers. "KV" refers to the Key and Value matrices.
  }
  \label{fig:abc}
\end{figure*}

\section{Background}
\subsection{LLM Inference on Mobile NPUs}
Modern SoCs increasingly integrate NPUs to accelerate LLM inference~\cite{10592049} by optimizing integer arithmetic, offering a $2.0\times$ speedup over standard floating-point operations (\cref{tab:mm_overhead} in Appendix). To further maximize efficiency, hardware architectures like Qualcomm NPUs favor coarse-grained quantization schemes (per-tensor activation and per-channel weight) over fine-grained per-block alternatives, yielding an additional $1.2\times$ speedup. However, this efficient coarse-grained scheme causes significant accuracy degradation (e.g., 7.2\% loss on SmolLM2-1.7B) compared to per-block methods like LPBQ~\cite{aimet_lpbq_2026}. To reconcile this trade-off, \sys adopts a hardware-friendly coarse-grained configuration to accelerate inference while preserving model performance.

\subsection{Quantization}
\label{sec:bkg:r-qat}
Quantization~\cite{gholami2021surveyquantizationmethodsefficient, frantar2023gptqaccurateposttrainingquantization} reduces memory footprint and latency by lowering numerical precision. For a full-precision tensor $X_{FP}$, uniform affine quantization is defined as $X_Q = \alpha ( \lfloor X_{FP}/\alpha + \beta \rceil - \beta )$, where $\alpha$ is the scale and $\beta$ is the zero-point. Symmetric schemes set $\alpha = \max(|X_{FP}|)/(2^{N-1}-1)$ and $\beta = 0$, while asymmetric schemes define $\alpha = (\max(X_{FP})-\min(X_{FP}))/(2^N-1)$ and $\beta = \text{round}(-\min(X_{FP})/\alpha)$, where $N$ is the bit-width. However, LLMs present a challenge due to activation distributions with significant outliers~\cite{dettmers2022llmint88bitmatrixmultiplication, xiao2024smoothquantaccurateefficientposttraining}, which lead to a marked drop in performance.

\textbf{Rotation-based Quantization} 
Prior works\cite{ashkboos2024quarotoutlierfree4bitinference, liu2025spinquantllmquantizationlearned} have shown that Hadamard matrices ($H \in \{+1, -1\}^{n \times n}$) are particularly effective in mitigating outliers, as they can redistribute extreme values across all channels, thereby suppressing their impact. Owing to the orthogonality ($H^\top H = I$), the following equivalence always holds: $Y = X W^\top = (X H)(H^\top W^\top)$. This property allows activations and weights to be rotated without altering the end-to-end inference output of the model.

Prior work~\cite{liu2025spinquantllmquantizationlearned} has explored four types of Hadamard matrices for activations (Figure~\ref{fig:framework}): $R_1$ (shared across layers), $R_2$ (applied to Value), $R_3$ (applied to Query and Key), and $R_4$ (applied to down-projection). Among these, $R_1$ and $R_2$ can be fused into the weights ($WH$) offline, eliminating runtime overhead. It further improves $R_1$ and $R_2$ using Cayley optimization to discover superior rotation matrices while maintaining orthogonality. In contrast, $R_3$ and $R_4$ require online floating-point matrix multiplications during inference, introducing approximately 2$\times$ computational overhead.

\section{Motivation}
\label{sec: init motivation}

\noindent \textbf{Joint optimization of static parameters and rotation matrices.}
Existing optimization-based methods typically rely on dynamic quantization to simulate errors. However, as shown in \cref{fig:a}, the dynamic activation scale for $W_o$ of the $21^{th}$ layer fluctuates significantly during the optimization (e.g., a variance of 0.58). This instability indicates that dynamic simulation fails to capture the static quantization error. Consequently, directly converting the optimized model to static quantization for on-NPU inference leads to notable accuracy degradation (e.g., 15.61\% for the SmolLM2-1.7B-Instruct model). In contrast, fixing quantization parameters before optimization prevents such mismatch but severely hinders convergence, resulting in suboptimal performance as shown in \cref{fig:b} (e.g., loss variance of 0.3). This is because the optimal range of quantization parameters continuously shifts as rotation matrices are optimized. Therefore, the strong coupling between rotation matrices and quantization parameters implies that static calibration prior to joint optimization is inadequate.

\noindent \textbf{Impact of initialization on convergence.} 
Our experiments further reveal that quantization parameters initialization is critical for optimization. As shown in \cref{fig:b}, poor initialization can significantly slow convergence, leading to inferior results (e.g., final loss variance of 0.8) or even divergence.

\noindent \textbf{Selectivity of learnable quantization parameters.} 
A quantized LLM encompasses a number of static quantization parameters. Counterintuitively, \cref{fig:c} shows that increasing the number of learnable quantization parameters does not improve performance. Specifically, jointly optimizing the input activations, output activations, weight of linear layers and KV tensors results in the lowest accuracy. In contrast, excluding output activations improves performance, while optimizing only the input activation and weight quantization parameters of linear layers yields the best performance.

\noindent \textbf{Latency-accuracy trade-off in rotation-based quantization.}
Recent rotation-based quantization methods~\cite{ashkboos2024quarotoutlierfree4bitinference, liu2025spinquantllmquantizationlearned} effectively mitigate outliers by applying Hadamard matrices to activations. While offline-fused rotations ($R_1, R_2$) incur no latency, online rotations ($R_3, R_4$) require floating-point matrix multiplications during inference, introducing approximately $2\times$ computational overhead. To improve NPU inference efficiency, \sys completely eliminates these online rotations. However, this removal causes the input activations of down\_proj to exhibit severe outliers, making them significantly more difficult to quantize and leading to substantial performance degradation.

\textbf{Insights: Motivated by these findings, Quant.npu jointly optimizes static quantization parameters and rotation matrices through three key strategies: (i) rotation-and-bit-width-aware initialization (\cref{sec:init}) to stabilize convergence; (ii) distribution-aware selective optimization (\cref{sec:train}) to improve optimization effectiveness and manage complexity; (iii) adaptive mixed-precision (\cref{sec:adaptive mixed precision}) to preserve accuracy while eliminating high-latency online rotations.}

\section{Method}

\subsection{Overall Framework}
In this section, we analyze the distribution differences between rotated and unrotated activations and introduce our integer-only fully static quantization framework, \sys. As illustrated in ~\cref{fig:framework}, it consists of three key components: 1) a rotation-and-bit-width-aware initialization strategy (~\cref{sec:init}), which secures an optimal initial state and stabilizes subsequent optimization; 2) a distribution-aware selective optimization (~\cref{sec:train}), which decouples the quantization into a two-stage quantization pipeline, isolating sensitive heavy-tailed tensors from joint optimization, thereby reducing resource overhead and improving model performance; and 3) an outlier-aware adaptive mixed-precision strategy (~\cref{sec:adaptive mixed precision}), which mitigates outlier effects introduced by removing online rotation matrices (R4) with minimal computational cost while preserving model accuracy.

\subsection{Rotation-and-bit-width-aware Initialization}
\label{sec:init}

\noindent \textbf{Analysis of Scale Initialization.} 
We first analyze two representative methods: \textit{Mean-based}~\cite{bhalgat2020lsqimprovinglowbitquantization} and \textit{Max-Min}. The former determines quantization parameters from distribution statistics (mean and variance), while the latter relies on the full value range: $s_{init} = max(|\mu - 3\sigma|, |\mu + 3\sigma|) / 2^{b-1}$ and $s_{init} = (X_{max} - X_{min}) / (2^{b}-1)$, respectively. Here, $b$ denotes the bit-width, and $\mu$ and $\sigma$ are the mean and standard deviation, and $X_{max}$/$X_{min}$ denote the maximum/minimum values of the tensor. A more comprehensive analysis of additional initialization methods is provided in the Appendix~\ref{Appendix: act init}.

As indicated by the equations, the \textit{Max-Min} method captures a broad dynamic range, ensuring uniform coverage of all data points. However, this often results in coarser quantization granularity for the dense regions of the data. In contrast, the \textit{Mean-based} method provides higher quantization precision for the central distribution, effectively clipping extreme values (e.g., outliers). Consequently, \textit{Max-Min} initialization is preferable for higher bit-widths (e.g., 8-bit or 16-bit), where the quantization error remains negligible even when the numerical distribution spans a wide range. Conversely, when the data approximately follows a Gaussian distribution, the \textit{Mean-based} method yields better performance under low-bit quantization by minimizing error in the most information-rich regions.

\noindent \textbf{Activation Distribution Analysis.} 
Furthermore, we categorize activations into two types based on rotation: \textit{unrotated} and \textit{rotated}, as illustrated in ~\cref{fig:R_UR_8}. Rotated activations exhibit a Gaussian distribution, whereas unrotated activations typically follow a heavy-tailed distribution. This distinction aligns well with the characteristics of the initialization methods discussed above. Therefore, for rotated activations (e.g., input activation of $W_q$, $W_o$, $W_{up}$), we employ \textit{Mean-based} initialization. Conversely, unrotated activations require \textit{Max-Min} initialization to accommodate their wide range.

\noindent \textbf{Initialization Rule.} 
Based on the above analysis, we summarize our choice of initialization methods as follows: 
(1) For rotated activations, we adopt \textit{Mean-based} initialization to support lower-bit quantization.
(2) For unrotated activations, we use \textit{Max-Min} initialization with higher bit-widths (e.g., 8-bit or 16-bit) to accommodate their wide range, thereby ensuring model accuracy.
Detailed discussion and theoretical analysis of this initialization rule are provided in Appendix~\ref{Appendix: rule} and Appendix~\ref{Appendix: prove}.

\begin{figure*}[t]
  \centering
  \includegraphics[width=\linewidth]{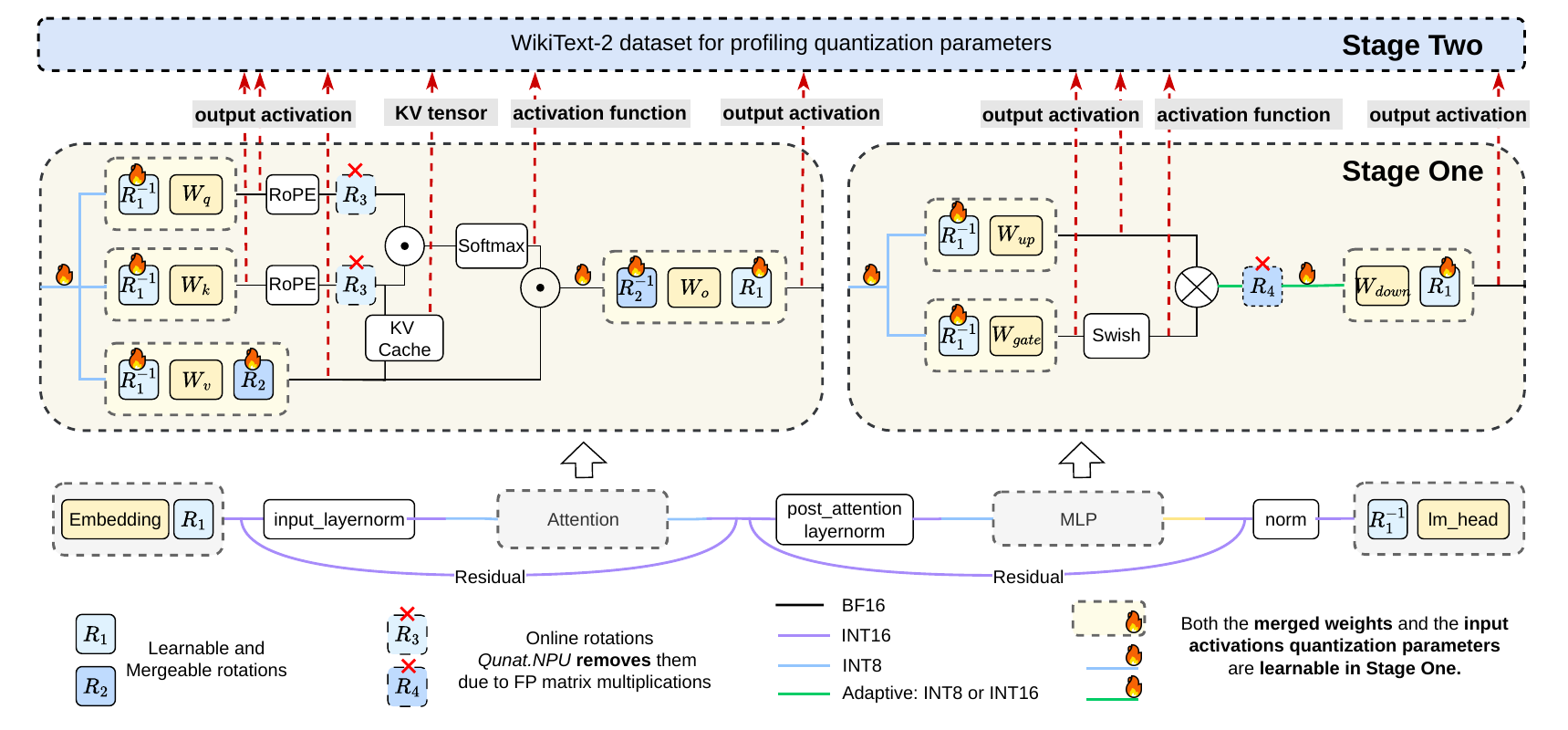}
  % \vspace{-10pt}
  \caption{
  The Overview of \sys. The quantization is divided into two stages. In the first stage, we optimize the quantization parameters for the "hot" merged weights and activations while keeping the other components at BF16 precision. In the second stage, we directly apply static calibration to the remaining activations and weights. In the diagram, black lines represent BF16, purple represents INT16, blue represents INT8, and green indicates a mixed-precision scheme of INT8 and INT16.
  }
  \label{fig:framework}
\end{figure*}

\subsection{Distribution-Aware Selective Optimization}
\label{sec:train}
In this work, we categorize quantization parameters into two types: (1) those associated with rotated distributions, such as the input activations and weights of linear layers; and (2) those associated with non-rotated distributions, including the output activations of linear layers and key/value tensors.

The fundamental distinction between these categories lies in their distribution smoothness and quantization sensitivity. Rotated distributions are significantly smoother and approximate a Gaussian profile (see ~\cref{fig:R_1}--~\cref{fig:R_4}), allowing their quantization parameters to be stably updated during the optimization. In contrast, non-rotated distributions exhibit pronounced heavy-tailed behavior (see ~\cref{fig:UR_1}--~\cref{fig:UR_4}), with values concentrated near zero. These distributions are inherently more sensitive to quantization, making their parameters difficult to optimize and prone to instability. A more detailed theoretical analysis of the optimization instability is provided in Appendix~\ref{Appendix: Gradient}. As illustrated in ~\cref{fig:c}, the convergence time is substantially increased without performance improvement. Furthermore, our initialization analysis in ~\cref{sec:init} indicates that non-rotated tensors require 8-bit or higher precision. Given this sufficient representational capacity, directly applying static calibration without further optimization to these components incurs negligible accuracy degradation.

Leveraging these insights, we propose a selective optimization pipeline that decouples the quantization process into two stages: Gradient-based Optimization and Static Calibration. This design significantly reduces optimization complexity and enhances stability without compromising model accuracy.

\subsubsection{Stage One: Gradient-based Optimization}
In the first stage, we jointly optimize the quantization parameters of input activations and weights for all linears together with the rotation matrices. Specifically, following prior work~\cite{liu2025spinquantllmquantizationlearned}, the input activations, output activations, and weights of the lm\_head linear are not quantized in this stage and remain in floating-point precision. Following ~\cite{esser2020learnedstepsizequantization, bhalgat2020lsqimprovinglowbitquantization}, we derive gradients for scale and zero-point parameters, apply gradient scaling, and introduce a local error loss to improve optimization stability.

\noindent \textbf{Learnable Scale and Zero-Point.}
We adopt the straight-through estimator (STE) ~\cite{bengio2013estimatingpropagatinggradientsstochastic} to approximate gradients through the rounding operator, to achieve differentiable optimization of quantitative parameters. The gradient of the scale parameter $s$ is approximated as:
\begin{equation}
\frac{\partial \hat{x}}{\partial s} = \frac{\partial \bar{x}}{\partial s}\, s + \bar{x} - z_p \approx
\begin{cases}
-\left( \dfrac{x}{s} + z_p \right) + \left\lfloor \dfrac{x}{s} + z_p \right\rceil, & \text{if } q_{\min} < \dfrac{x}{s} + z_p < q_{\max}, \\[6pt]
q_{\min} - z_p \;\; \text{or} \;\; q_{\max} - z_p, & \text{otherwise}.
\end{cases}
\label{eq:d_x_ds}
\end{equation}
Similarly, the gradient of the zero-point $z_p$ is:
\begin{equation}
\frac{\partial \hat{x}}{\partial z_p} = \left( \frac{\partial \bar{x}}{\partial z_p} - 1 \right) s \approx
\begin{cases}
0, & \text{if } q_{\min} < \dfrac{x}{s} + z_p < q_{\max}, \\[6pt]
-s, & \text{otherwise}.
\end{cases}
\end{equation}

\noindent \textbf{Gradient Scaling.}
During optimization, both rotation matrices and quantization parameters are updated jointly. However, the gradient magnitude of quantization parameters increases with the number of elements and bit-width, which leads to the gradient of the quantization parameter being much greater than that of the rotation matrix. To stabilize optimization, we apply the gradient scaling factor $g = 1 / \sqrt{N \cdot Q_{max}}$ proposed in the prior work~\cite{esser2020learnedstepsizequantization}, where $N$ is the number of elements sharing the same quantization parameter, and $Q_{max}$ is the maximum representable integer value.

\noindent \textbf{Local Quantization Error Loss}
To further stabilize early-stage optimization and accelerate convergence, we introduce a local quantization error loss for activation quantizers:
\begin{align}
\min_{s, z_p} \; \left\| \mathcal{DQ}(\mathcal{Q}(\mathbf{x})) - \mathbf{x} \right\|_2^2
\end{align}
where $\mathcal{Q}(\cdot)$and $\mathcal{DQ}(\cdot)$ respectively represent quantization and dequantization operators. This loss directly constrains the reconstruction error between original and dequantized activations, enabling quantization parameters to converge to favorable values early in the optimization process.

\subsubsection{Stage Two: Static Calibration}
After the first stage (Gradient-based Optimization), we directly apply static calibration to the quantization parameters for the remaining operators, without further optimization. These include the output activations of linear layers, key/value tensors, and non-linear activation functions such as SiLU.

\subsection{Adaptive Mixed-Precision Quantization}
\label{sec:adaptive mixed precision}
As discussed in ~\cref{sec:bkg:r-qat}, we eliminate online rotation matrices to avoid floating-point overhead during inference. Consequently, down\_proj input activations are no longer smoothed by rotation. Following the principles in ~\cref{sec:init}, these activations require a minimum precision of 8-bit.

However, prior work~\cite{xu2024fastondevicellminference} indicates that down\_proj inputs exhibit severe outliers, as illustrated in Appendix~\ref{Appendix: down_proj outlier}, causing significant accuracy degradation even at 8-bit quantization (see~\cref{fig:adaptive}). Further analysis reveals that quantization sensitivity varies significantly across linear layers~\cite{xu2024fastondevicellminference}. We observe that selectively promoting a small fraction of highly sensitive activations across all down\_proj layers from 8-bit to 16-bit effectively recovers model accuracy. Leveraging this insight, we propose an adaptive mixed-precision strategy that assigns bit-widths based on a quantization sensitivity metric.

\noindent \textbf{Quantization Sensitivity Metric.}
We identify two distinct activation distribution patterns that lead to quantization-induced accuracy degradation:
(1) The activation distribution has a large dynamic range that exceeds the representational capacity of the static quantizer.
(2) The activation values are mostly clustered near zero, so quantization preserves these small values but loses the important information carried by a few outliers. Both cases are characterized by the amplification of quantization error due to outliers. We therefore define a quantization sensitivity metric based on relative quantization error:
\begin{align}
\text{ratio} = \frac{1}{N} \sum_{i=1}^{N} 
\frac{\left| \text{dequantized}_i - \text{original\_activation}_i \right|}
     {\left| \text{original\_activation}_i \right| + 10^{-8}}
\end{align}
where \text{original\_activation} denotes the original activation tensor, \text{dequantized} is the corresponding dequantized tensor, and $N$ is the number of elements. In practice, we quantize activations using low-bit \textit{Max-Min} initialization and compute this metric. A ratio close to 1 indicates severe outlier-induced error and insufficient bit-width, whereas a ratio near 0 indicates the quantization is nearly lossless.

\noindent \textbf{Adaptive Mixed-Precision (8-/16-bit).}
The adaptive strategy targets down\_proj input activations. We initially apply 8-bit quantization with \textit{Max–Min} initialization to these, compute sensitivity ratios, and selectively increase the bit-width of highly sensitive activations to 16-bit. As shown in ~\cref{sec: adaptive experiments}, adjusting only the top 10\% of activations by the sensitivity metric (e.g., for a model with 30 layers, only 3 layers are adjusted) is sufficient to achieve performance comparable to full 16-bit quantization.

\begin{table*}[!t]
\centering
\small
\setlength{\tabcolsep}{4pt}
\renewcommand{\arraystretch}{1.15}
\caption{\textbf{Evaluation on Llama-3.2-3B-Instruct}: Perplexity(PPL) and Zero-shot QA task accuracy results of 4-bit weight or 8-bit weight quantized models.}
\label{tab:zero_shot_qa}
\begin{adjustbox}{max width=0.97\linewidth}
\begin{tabular}{c | c c c | c | c c c c c c c} 
\toprule
Model & Method & W-A-KV & W. Gran. & PPL(C4) $\downarrow$ & PIQA & Winogrande & HellaSwag & ARC-E & ARC-C & LAMBADA & Avg $\uparrow$ \\
\midrule

% ===================== Llama-3.2-3B-Instruct =====================
\multirow{14}{*}{\makecell{Llama-3.2\\-3B-it}}
& FP32       & --    & -- & 16.85 & 0.7541 & 0.6748 & 0.7002 & 0.5593 & 0.4198 & 0.5696 & 0.613 \\
& executorch & 4-16-8 & per-block & 18.19 & 0.7361 & 0.6772 & 0.6878 & 0.5328 & 0.3942 & 0.5329 & 0.5935 \\
& executorch  & 8-8-8 & per-channel & 17.76 & 0.7372 & 0.6638 & 0.694 & \textbf{0.5518} & \textbf{0.4266} & 0.5527 & 0.6044 \\
& QuaRot  & 8-8-8 & per-channel & 17.06 & 0.7448 & 0.6654 & 0.6958 & 0.5467 & 0.4155 & 0.5639 & 0.6054 \\
& SpinQuant  & 8-8-8 & per-channel & 17.18 & 0.7405 & 0.6677 & 0.6936 & 0.5501 & 0.4113 & 0.5577 & 0.6035 \\
& \cellcolor{lightred}Quant.npu & \cellcolor{lightred}8-8-8 & \cellcolor{lightred}per-channel & \cellcolor{lightred}\textbf{16.42} & \cellcolor{lightred}\textbf{0.7541} & \cellcolor{lightred}\textbf{0.6827} & \cellcolor{lightred}\textbf{0.7003} & \cellcolor{lightred}0.5446 & \cellcolor{lightred}0.4104 & \cellcolor{lightred}\textbf{0.6066} & \cellcolor{lightred}\textbf{0.6165} \\
& executorch  & 4-8-8 & per-channel & 26.39 & 0.7040 & 0.6046 & 0.628 & 0.4798 & 0.3635 & 0.3835 & 0.5272 \\
& QuaRot  & 4-8-8 & per-channel & 30.06 & 0.6708 & 0.5691 & 0.5755 & 0.4609 & 0.3234 & 0.2600 & 0.4766 \\
& SpinQuant  & 4-8-8 & per-channel & 28.78 & 0.6393 & 0.5651 & 0.5620 & 0.4179 & 0.2978 & 0.2915 & 0.4623 \\
& \cellcolor{lightred}Quant.npu & \cellcolor{lightred}4-8-8 & \cellcolor{lightred}per-channel & \cellcolor{lightred}\textbf{19.16} & \cellcolor{lightred}\textbf{0.7378} & \cellcolor{lightred}\textbf{0.6496} & \cellcolor{lightred}\textbf{0.6777} & \cellcolor{lightred}\textbf{0.5105} & \cellcolor{lightred}\textbf{0.3891} & \cellcolor{lightred}\textbf{0.5317} & \cellcolor{lightred}\textbf{0.5827} \\
& executorch  & 4-4-8 & per-channel & 6950.81 & 0.5196 & 0.5138 & 0.2493 & 0.2601 & 0.2474 & 0.0 & 0.2984 \\
& QuaRot  & 4-4-8 & per-channel & 481.82 & 0.5196 & 0.5091 & 0.2768 & 0.2689 & 0.2517 & 0.0014 & 0.3046 \\
& SpinQuant  & 4-4-8 & per-channel & 528.97 & 0.5103 & 0.4878 & 0.2731 & 0.2614 & 0.2295 & 0.0019 & 0.2940 \\
& \cellcolor{lightred}Quant.npu & \cellcolor{lightred}4-4-8 & \cellcolor{lightred}per-channel & \cellcolor{lightred}\textbf{21.76} & \cellcolor{lightred}\textbf{0.7209} & \cellcolor{lightred}\textbf{0.6219} & \cellcolor{lightred}\textbf{0.6420} & \cellcolor{lightred}\textbf{0.4916} & \cellcolor{lightred}\textbf{0.3695} & \cellcolor{lightred}\textbf{0.4689} &
\cellcolor{lightred}\textbf{0.5525} \\
\bottomrule
\end{tabular}
\end{adjustbox}
\end{table*}

\section{Experiments}
\label{Experiments}
\noindent \textbf{Evaluation models and datasets.} 
Our evaluation is conducted on LLaMA-3.2-3B-Instruct~\cite{grattafiori2024llama3herdmodels}. Following prior work~\cite{ashkboos2024quarotoutlierfree4bitinference}, we report the perplexity (PPL) of language generation tasks on the C4 datasets~\cite{raffel2023exploringlimitstransferlearning}. In addition, we evaluate zero-shot commonsense reasoning performance on six standard benchmarks: PIQA~\cite{bisk2019piqareasoningphysicalcommonsense}, WinoGrande~\cite{sakaguchi2019winograndeadversarialwinogradschema}, HellaSwag~\cite{zellers2019hellaswagmachinereallyfinish}, ARC-Easy, ARC-Challenge~\cite{clark2018thinksolvedquestionanswering} and LAMBADA~\cite{paperno2016lambadadatasetwordprediction}. All evaluations are developed with reference to the lm-eval-harness framework.

\noindent \textbf{Baselines}
We compare against three state-of-the-art frameworks: (i) ExecuTorch~\cite{pytorch_executorch_2026}, an edge runtime supporting Hexagon NPU inference with per-block weight PTQ; (ii) QuaRot~\cite{ashkboos2024quarotoutlierfree4bitinference}, a rotation-based PTQ method with random Hadamard matrices; and (iii) SpinQuant~\cite{liu2025spinquantllmquantizationlearned}, a rotation-based PTQ method with learnable rotation matrices, utilizing dynamic quantization during optimization.

\noindent \textbf{Implementation Details.}
The optimization stage in experiments employs 4 NVIDIA A40 GPUs (48 GB). We optimize on WikiText-2 (bfloat16, seq\_len 2048) for 512 steps (7 epochs) using a batch size of 2 and 2 gradient accumulation steps. To stabilize optimization, the local quantization error loss is applied for the first 128 steps (~\cref{sec:train}). We use SGD with cosine decay, setting learning rates to 0.1 for rotation matrices and 0.01 for static quantization parameters.

\noindent \textbf{Quantization.}
\sys adopts pre-channel symmetric weight quantization and per-tensor symmetric activation quantization. For bit-width assignment, we only vary the quantization precision of input activations and weights of linear layers, while keeping other tensors at fixed precision. Specifically, KV cache and output activations of linear layers are fixed at 8-bit precision, whereas remaining operations (e.g., SiLU) are fixed at 16-bit precision to preserve model accuracy. In particular, for the down\_proj layers, the activation bit-width is assigned using the adaptive mixed-precision strategy proposed in ~\cref{sec:adaptive mixed precision}. All static quantization parameters are initialized based on activation characteristics following in ~\cref{sec:init}. Calibration utilizes 16 samples from WikiText-2~\cite{merity2016pointersentinelmixturemodels}.

\subsection{Main Results}
\label{Main Results}
\noindent \textbf{Setup.}
All baselines and our method Quant.npu are implemented on ExecuTorch (integer-only quantization for NPU inference) to ensure a fair comparison, modifying only the rotation matrices and quantizers. Specifically, ExecuTorch uses 4-bit block-wise weights (low-power blockwise quantization, LPBQ~\cite{aimet_lpbq_2026}) and 16-bit activation quantization. In contrast, another ExecuTorch variant, QuaRot, SpinQuant, and \sys share the same quantization configuration, including per-tensor activation quantization, per-channel weight quantization, and 16-bit activations for all down\_proj layers. The key difference is that QuaRot utilizes fixed Hadamard matrices, SpinQuant optimizes only rotation matrices, whereas \sys jointly optimizes rotations and static quantization parameters. Furthermore, \sys applies a two-stage quantization (~\cref{sec:train}) for operators like SiLU.

\noindent \textbf{Results: higher accuracy at iso-precision.}
\cref{tab:zero_shot_qa} reports the perplexity (PPL) on C4 and the accuracy on six zero-shot commonsense reasoning benchmarks under W8A8, W4A8, and W4A4 settings. Across all major benchmarks, \sys outperforms state-of-the-art quantization approaches while maintaining equivalent inference efficiency. Notably, under W8A8 quantization, Llama-3.2-3B-Instruct achieves zero-shot accuracy comparable to, or even slightly better than, its FP32 baseline. The advantage of \sys becomes even more pronounced under the challenging W4A8 setting. For Llama-3.2-3B-Instruct, \sys-W4A8 reduces PPL from 28.78 to 19.16 and boosts zero-shot accuracy from 46.23\% to 58.27\%, representing a 12.04\% improvement. Furthermore, \sys-W4A4 maintains a relatively high accuracy, even in scenarios where the baseline almost entirely fails, exhibiting only about a 6\% drop in accuracy compared to the FP32 baseline. Extensive supplementary experiments are detailed in Appendix~\ref{Appendix: Additional Experimental}. We validate the broad applicability of \sys across newer architectures (Qwen3-1.7B~\cite{yang2025qwen3}) and larger model (Llama3-8B~\cite{grattafiori2024llama3herdmodels}) , while also verifying its ability to preserve instruction-following capabilities. Additional evaluation on more widely used models, comparisons with MobileQuant~\cite{tan2024mobilequant}, and profiling of memory and energy consumption on the newer SM8750 NPU are provided to further confirm our method's superiority.

\subsection{End-to-end Latency}
\label{End-to-end Latency}
\noindent \textbf{Setup.}
We evaluate inference latency on the SM8650 NPU using HellaSwag~\cite{zellers2019hellaswagmachinereallyfinish}, Persona-Chat~\cite{jandaghi2023faithfulpersonabasedconversationaldataset}, and DroidTask~\cite{wen2024autodroidllmpoweredtaskautomation}, spanning varied token lengths to show system performance under varying loads.

\textbf{Results: faster inference with negligible accuracy loss.} \sys-W4A8 yields a up to 15.1\% speedup compared to ExecuTorch-W4A16 with a negligible accuracy drop of 1\% on Llama-3.2-3B-Instruct (averaging a 2.58\% drop and 1.23 PPL increase across all tasks and all models). These results demonstrate that the joint optimization of rotation matrices and static quantization parameters enables \sys to adopt lower-bit and more coarse-grained quantization than ExecuTorch-W4A16, thereby improving inference efficiency without significantly compromising performance.

\begin{figure}[t]
  \centering

  \begin{minipage}[t]{0.48\linewidth}
    \centering
    \includegraphics[width=\linewidth]{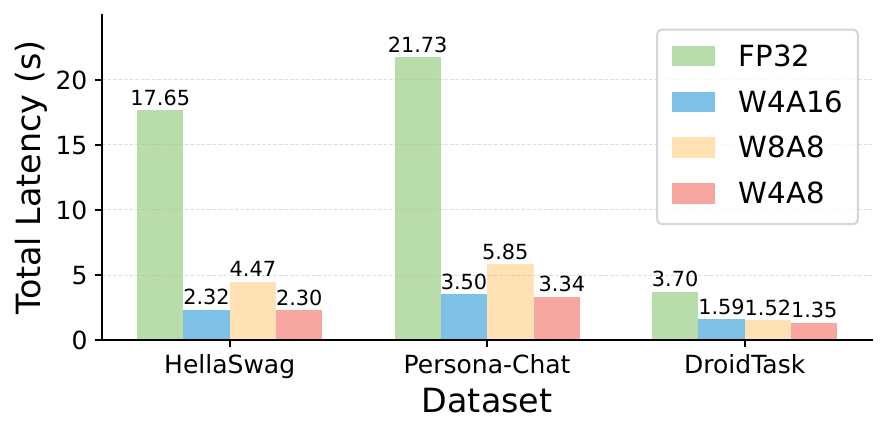} 
    \caption{End-to-end latency results, where blue bars represent the per-block weight quantization in ExecuTorch; the yellow and red bars indicate per-channel weight quantization in \sys.}
    \label{fig:latency}
  \end{minipage}
  \hfill
  \begin{minipage}[t]{0.48\linewidth}
    \centering
    \includegraphics[width=\linewidth]{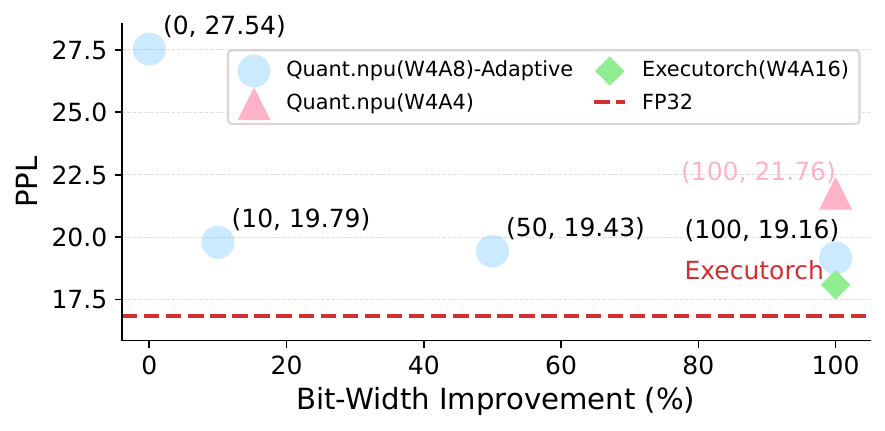}
    \caption{The trend of perplexity (PPL) with respect to the percentage of linear-layer activations quantized to higher bit-widths for the Llama-3.2-3B-Instruct model.}
    \label{fig:adaptive}
  \end{minipage}

\end{figure}

\subsection{Adaptive strategy}
\label{sec: adaptive experiments}
\noindent \textbf{Setup.} 
We employ an adaptive mixed-precision strategy prior to optimization, where activations of down\_proj layers are selectively promoted to 16-bit precision to better preserve model accuracy. Consistent with ~\cref{Main Results}, all evaluations fully utilize a pure integer-arithmetic pipeline. We further measure perplexity (PPL) on the C4 dataset to validate the effectiveness of the proposed strategy.

\noindent \textbf{Results.}
\cref{fig:adaptive} illustrates the impact of mixed-precision ratios of all down\_proj activations on perplexity (PPL). We observe that increasing the precision of only a small fraction of down\_proj activations yields substantial accuracy gains. For instance, promoting just 10\% of activations to 16-bit precision results in a PPL increase of only 0.63 compared to the full 16-bit baseline, while reducing PPL by 7.75 relative to the full 8-bit setting. Based on this key insight, \sys further reduces the use of high-bit activations, thereby significantly improving inference efficiency in practice with only marginal accuracy degradation compared to well-established ExecuTorch-W4A16 baseline.

\begin{table*}[!t]
\centering
\small
\setlength{\tabcolsep}{6pt}
\renewcommand{\arraystretch}{1.15}
\caption{\textbf{Ablation study of Quant.npu’s main components on Llama3.2-3B-Instruct}:  Perplexity (PPL) and zero-shot QA task accuracy results under different component settings. Here, "D/S" indicates whether dynamic or static quantization was used during optimization; "IM" refers to the initialization methodology; "Sel-Opt" denotes the application of our selective optimization strategy; "Adaptive" indicates whether a mixed-precision strategy was applied, with the value in parentheses representing the percentage of positions where the quantization bit-width was increased. The FP32 model does not involve any quantization-related components (D/S, Optim, IM, Sel-Opt, Adaptive). Thus, these fields are marked as "--", and this row represents the performance of the FP32 model.}
\label{tab:ablation}
\begin{adjustbox}{max width=0.97\textwidth}
\begin{tabular}{c | c | c c c c c | c c c c c} 
\toprule
Model & Method & D/S & Optim & \makecell{IM \\ (Sec. 4.2)} & \makecell{Sel-Opt \\ (Sec. 4.3)} & \makecell{Adaptive (\%) \\ (Sec. 4.4)} & PPL(C4) $\downarrow$ & Winogrande $\uparrow$ & ARC-C $\uparrow$ & LAMBADA $\uparrow$ & Avg $\uparrow$\\
\midrule
\multirow{9}{*}{\makecell{Llama-3.2\\-3B-it}} 
& FP32 & -- & -- & -- & -- & -- & 22.64 & 0.6093 & 0.3993 & 0.4731 & 0.4939 \\
& SpinQuant & D & R &  &  &  & 1187.69 & 0.4917 & 0.2619 & 0.0004 & 0.2513 \\
& -- & S & R &  &  &  & 150.58 & 0.5036 & 0.2184 & 0.0704 & 0.2641 \\
& -- & S & R and Scale (Joint) &  &  &  & 69.65 & 0.5130 & 0.2662 & 0.1114 & 0.2969 \\
& -- & S & R and Scale (Joint) & \checkmark &  &  & 36.42 & 0.5659 & 0.2910 & 0.2820 & 0.3796 \\
& -- & S & R and Scale (Joint) & \checkmark & \checkmark &  & 30.89 & 0.5809 & 0.3114 & 0.3122 & 0.4015 \\
& \cellcolor{lightred}Ours & \cellcolor{lightred}S & \cellcolor{lightred}R and Scale (Joint) & \cellcolor{lightred}\checkmark & \cellcolor{lightred}\checkmark & \cellcolor{lightred}\checkmark(10\%) & \cellcolor{lightred}22.09 & \cellcolor{lightred}0.6062 & \cellcolor{lightred}0.3703 & \cellcolor{lightred}0.4434 & \cellcolor{lightred}0.4733 \\
& -- & S & R and Scale (Joint) & \checkmark & \checkmark & \checkmark(100\%) & 21.56 & 0.6219 & 0.3456 & 0.4681 & 0.4785 \\
& \makecell{High Precision \\ Alone} & S & R and Scale (Joint) &  &  & \checkmark(100\%) & 31.16 & 0.5651 & 0.3055 & 0.3214 & 0.3973 \\
\bottomrule
\end{tabular}
\end{adjustbox}
\end{table*}

\subsection{Ablation Study}
\label{sec: adaptive experiments}
\noindent \textbf{Setup.} 
To better demonstrate the effectiveness of our proposed method, we conduct an ablation study on the key components of \sys. All experiments are performed on LlaMA-3.2-3B-Instruct. To clearly present the results, we quantize only the weights, input activations, and output activations of the linear layers, while other operators such as SiLU remain in full precision.

\noindent \textbf{Results.}
As summarized in Table~\ref{tab:ablation}, starting from a naive static quantization baseline, we progressively integrate each component of our method. In the baseline setting, where both rotation matrices and quantization parameters are directly optimized under static quantization constraints, the model suffers significant degradation, yielding a perplexity (PPL) of 69.65 and an average accuracy of 0.2969. Introducing our proposed initialization strategy significantly stabilizes optimization process and improves optimization quality, reducing PPL to 36.42 and improving average accuracy to 0.3796. Building on this, the proposed distribution-aware selective optimization, implemented through a two-stage quantization pipeline, further improves model performance, achieving a PPL of 30.89 and an average accuracy of 0.4015. Finally, by incorporating the adaptive mixed-precision strategy, only 10\% of the down\_proj activations (3 out of 28 layers) are promoted to 16-bit precision, while the remaining activations are kept at 8-bit, achieving a PPL of 22.09 and an average accuracy of 0.4733. Notably, the resulting performance remains nearly identical to that of both the FP32 (Avg: 0.4939) and full 16-bit (Avg: 0.4785) settings. These results demonstrate that each proposed component contributes to performance improvement, and that \sys effectively maintains performance close to the FP32 model, achieving a favorable trade-off between accuracy and inference efficiency.

\section{Conclusions}
We have proposed \sys, an NPU-friendly fully static quantization framework for on-device LLMs that integrates learnable quantization parameters and rotation matrices to enable efficient low-bit inference. By incorporating a rotation-and-bit-width-aware initialization strategy together with a decoupled two-stage selective optimization pipeline, \sys achieves competitive accuracy while significantly reducing inference latency. Furthermore, \sys demonstrates strong generalization across different model architectures and scales, highlighting its scalability in practical deployment scenarios. Experiments confirm that \sys achieves comparable accuracy to state-of-the-art methods while delivering up to a 15.1\% reduction in latency on mobile NPUs.

% \begin{ack}
% Use unnumbered first level headings for the acknowledgments. All acknowledgments
% go at the end of the paper before the list of references. Moreover, you are required to declare
% funding (financial activities supporting the submitted work) and competing interests (related financial activities outside the submitted work).
% More information about this disclosure can be found at: \url{https://neurips.cc/Conferences/2026/PaperInformation/FundingDisclosure}.

% Do {\bf not} include this section in the anonymized submission, only in the final paper. You can use the \texttt{ack} environment provided in the style file to automatically hide this section in the anonymized submission.
% \end{ack}

\bibliographystyle{plain}
\bibliography{reference}

\begin{thebibliography}{10}

\bibitem{allal2025smollm2smolgoesbig}
Loubna~Ben Allal, Anton Lozhkov, Elie Bakouch, Gabriel~Martín Blázquez, Guilherme Penedo, Lewis Tunstall, Andrés Marafioti, Hynek Kydlíček, Agustín~Piqueres Lajarín, Vaibhav Srivastav, and et~al.
\newblock Smollm2: When smol goes big -- data-centric training of a small language model, 2025.

\bibitem{ashkboos2024quarotoutlierfree4bitinference}
Saleh Ashkboos, Amirkeivan Mohtashami, Maximilian~L. Croci, Bo~Li, Pashmina Cameron, Martin Jaggi, Dan Alistarh, Torsten Hoefler, and James Hensman.
\newblock Quarot: Outlier-free 4-bit inference in rotated llms, 2024.

\bibitem{bengio2013estimatingpropagatinggradientsstochastic}
Yoshua Bengio, Nicholas Léonard, and Aaron Courville.
\newblock Estimating or propagating gradients through stochastic neurons for conditional computation, 2013.

\bibitem{bhalgat2020lsqimprovinglowbitquantization}
Yash Bhalgat, Jinwon Lee, Markus Nagel, Tijmen Blankevoort, and Nojun Kwak.
\newblock Lsq+: Improving low-bit quantization through learnable offsets and better initialization, 2020.

\bibitem{bisk2019piqareasoningphysicalcommonsense}
Yonatan Bisk, Rowan Zellers, Ronan~Le Bras, Jianfeng Gao, and Yejin Choi.
\newblock Piqa: Reasoning about physical commonsense in natural language, 2019.

\bibitem{bondarenko2024low}
Yelysei Bondarenko, Riccardo Del~Chiaro, and Markus Nagel.
\newblock Low-rank quantization-aware training for llms.
\newblock {\em arXiv preprint arXiv:2406.06385}, 2024.

\bibitem{chen2025efficientqat}
Mengzhao Chen, Wenqi Shao, Peng Xu, Jiahao Wang, Peng Gao, Kaipeng Zhang, and Ping Luo.
\newblock Efficientqat: Efficient quantization-aware training for large language models.
\newblock In {\em Proceedings of the 63rd Annual Meeting of the Association for Computational Linguistics (Volume 1: Long Papers)}, pages 10081--10100, 2025.

\bibitem{clark2018thinksolvedquestionanswering}
Peter Clark, Isaac Cowhey, Oren Etzioni, Tushar Khot, Ashish Sabharwal, Carissa Schoenick, and Oyvind Tafjord.
\newblock Think you have solved question answering? try arc, the ai2 reasoning challenge, 2018.

\bibitem{dettmers2022llmint88bitmatrixmultiplication}
Tim Dettmers, Mike Lewis, Younes Belkada, and Luke Zettlemoyer.
\newblock Llm.int8(): 8-bit matrix multiplication for transformers at scale, 2022.

\bibitem{dettmers2023qlora}
Tim Dettmers, Artidoro Pagnoni, Ari Holtzman, and Luke Zettlemoyer.
\newblock Qlora: Efficient finetuning of quantized llms.
\newblock {\em Advances in neural information processing systems}, 36:10088--10115, 2023.

\bibitem{du2024bitdistiller}
Dayou Du, Yijia Zhang, Shijie Cao, Jiaqi Guo, Ting Cao, Xiaowen Chu, and Ningyi Xu.
\newblock Bitdistiller: Unleashing the potential of sub-4-bit llms via self-distillation.
\newblock In {\em Proceedings of the 62nd Annual Meeting of the Association for Computational Linguistics (Volume 1: Long Papers)}, pages 102--116, 2024.

\bibitem{esser2020learnedstepsizequantization}
Steven~K. Esser, Jeffrey~L. McKinstry, Deepika Bablani, Rathinakumar Appuswamy, and Dharmendra~S. Modha.
\newblock Learned step size quantization, 2020.

\bibitem{frantar2023gptqaccurateposttrainingquantization}
Elias Frantar, Saleh Ashkboos, Torsten Hoefler, and Dan Alistarh.
\newblock Gptq: Accurate post-training quantization for generative pre-trained transformers, 2023.

\bibitem{gholami2021surveyquantizationmethodsefficient}
Amir Gholami, Sehoon Kim, Zhen Dong, Zhewei Yao, Michael~W. Mahoney, and Kurt Keutzer.
\newblock A survey of quantization methods for efficient neural network inference, 2021.

\bibitem{grattafiori2024llama3herdmodels}
Aaron Grattafiori, Abhimanyu Dubey, Abhinav Jauhri, Abhinav Pandey, Abhishek Kadian, Ahmad Al-Dahle, Aiesha Letman, Akhil Mathur, Alan Schelten, Alex Vaughan, and et~al.
\newblock The llama 3 herd of models, 2024.

\bibitem{gupta2015deep}
Suyog Gupta, Ankur Agrawal, Kailash Gopalakrishnan, and Pritish Narayanan.
\newblock Deep learning with limited numerical precision.
\newblock In {\em International conference on machine learning}, pages 1737--1746. PMLR, 2015.

\bibitem{jacob2018quantization}
Benoit Jacob, Skirmantas Kligys, Bo~Chen, Menglong Zhu, Matthew Tang, Andrew Howard, Hartwig Adam, and Dmitry Kalenichenko.
\newblock Quantization and training of neural networks for efficient integer-arithmetic-only inference.
\newblock In {\em Proceedings of the IEEE conference on computer vision and pattern recognition}, pages 2704--2713, 2018.

\bibitem{jandaghi2023faithfulpersonabasedconversationaldataset}
Pegah Jandaghi, XiangHai Sheng, Xinyi Bai, Jay Pujara, and Hakim Sidahmed.
\newblock Faithful persona-based conversational dataset generation with large language models, 2023.

\bibitem{lin2024awq}
Ji~Lin, Jiaming Tang, Haotian Tang, Shang Yang, Wei-Ming Chen, Wei-Chen Wang, Guangxuan Xiao, Xingyu Dang, Chuang Gan, and Song Han.
\newblock Awq: Activation-aware weight quantization for on-device llm compression and acceleration.
\newblock {\em Proceedings of machine learning and systems}, 6:87--100, 2024.

\bibitem{liu2024llm}
Zechun Liu, Barlas Oguz, Changsheng Zhao, Ernie Chang, Pierre Stock, Yashar Mehdad, Yangyang Shi, Raghuraman Krishnamoorthi, and Vikas Chandra.
\newblock Llm-qat: Data-free quantization aware training for large language models.
\newblock In {\em Findings of the Association for Computational Linguistics: ACL 2024}, pages 467--484, 2024.

\bibitem{liu2025spinquantllmquantizationlearned}
Zechun Liu, Changsheng Zhao, Igor Fedorov, Bilge Soran, Dhruv Choudhary, Raghuraman Krishnamoorthi, Vikas Chandra, Yuandong Tian, and Tijmen Blankevoort.
\newblock Spinquant: Llm quantization with learned rotations, 2025.

\bibitem{ma2023llmprunerstructuralpruninglarge}
Xinyin Ma, Gongfan Fang, and Xinchao Wang.
\newblock Llm-pruner: On the structural pruning of large language models, 2023.

\bibitem{merity2016pointersentinelmixturemodels}
Stephen Merity, Caiming Xiong, James Bradbury, and Richard Socher.
\newblock Pointer sentinel mixture models, 2016.

\bibitem{nagel2022overcoming}
Markus Nagel, Marios Fournarakis, Yelysei Bondarenko, and Tijmen Blankevoort.
\newblock Overcoming oscillations in quantization-aware training.
\newblock In {\em International Conference on Machine Learning}, pages 16318--16330. PMLR, 2022.

\bibitem{pamadi2025edge}
Vishesh~Narendra Pamadi and Pushpa Singh.
\newblock Edge ai vs cloud ai: A comparative study of performance latency and scalability.
\newblock {\em International Journal of Research in Modern Engineering \& Emerging Technology (IJRMEET)}, 13(3):13--35, 2025.

\bibitem{paperno2016lambadadatasetwordprediction}
Denis Paperno, Germán Kruszewski, Angeliki Lazaridou, Quan~Ngoc Pham, Raffaella Bernardi, Sandro Pezzelle, Marco Baroni, Gemma Boleda, and Raquel Fernández.
\newblock The lambada dataset: Word prediction requiring a broad discourse context, 2016.

\bibitem{pytorch_executorch_2026}
{PyTorch}.
\newblock {ExecuTorch: On‑Device AI Inference Powered by PyTorch}.
\newblock GitHub repository, 2026.
\newblock Version accessed Jan 2026.

\bibitem{qualcomm_applyencodings_2026}
{Qualcomm}.
\newblock Applyencodings, 2026.
\newblock Qualcomm Documentation. Accessed: 2026-01-29.

\bibitem{aimet_lpbq_2026}
{Qualcomm Innovation Center, Inc. (AIMET)}.
\newblock Low-power blockwise quantization (lpbq), 2026.
\newblock AIMET Documentation (Version 2.19.0). Accessed: 2026-01-29.

\bibitem{raffel2023exploringlimitstransferlearning}
Colin Raffel, Noam Shazeer, Adam Roberts, Katherine Lee, Sharan Narang, Michael Matena, Yanqi Zhou, Wei Li, and Peter~J. Liu.
\newblock Exploring the limits of transfer learning with a unified text-to-text transformer, 2023.

\bibitem{10592049}
Alejandro Rico, Satyaprakash Pareek, Javier Cabezas, David Clarke, Baris Ozgul, Francisco Barat, Yao Fu, Stephan Münz, Dylan Stuart, Patrick Schlangen, and et~al.
\newblock Amd xdna npu in ryzen ai processors.
\newblock {\em IEEE Micro}, 44(6):73--82, 2024.

\bibitem{sakaguchi2019winograndeadversarialwinogradschema}
Keisuke Sakaguchi, Ronan~Le Bras, Chandra Bhagavatula, and Yejin Choi.
\newblock Winogrande: An adversarial winograd schema challenge at scale, 2019.

\bibitem{shafee2025privacy}
Ahmed Shafee, SR~Hasan, and Tasneem~A Awaad.
\newblock Privacy and security vulnerabilities in edge intelligence: An analysis and countermeasures.
\newblock {\em Computers and Electrical Engineering}, 123:110146, 2025.

\bibitem{shao2024omniquantomnidirectionallycalibratedquantization}
Wenqi Shao, Mengzhao Chen, Zhaoyang Zhang, Peng Xu, Lirui Zhao, Zhiqian Li, Kaipeng Zhang, Peng Gao, Yu~Qiao, and Ping Luo.
\newblock Omniquant: Omnidirectionally calibrated quantization for large language models, 2024.

\bibitem{sun2024simpleeffectivepruningapproach}
Mingjie Sun, Zhuang Liu, Anna Bair, and J.~Zico Kolter.
\newblock A simple and effective pruning approach for large language models, 2024.

\bibitem{sun2025flatquantflatnessmattersllm}
Yuxuan Sun, Ruikang Liu, Haoli Bai, Han Bao, Kang Zhao, Yuening Li, Jiaxin Hu, Xianzhi Yu, Lu~Hou, Chun Yuan, Xin Jiang, Wulong Liu, and Jun Yao.
\newblock Flatquant: Flatness matters for llm quantization, 2025.

\bibitem{tan2024mobilequant}
Fuwen Tan, Royson Lee, {\L}ukasz Dudziak, Shell~Xu Hu, Sourav Bhattacharya, Timothy Hospedales, Georgios Tzimiropoulos, and Brais Martinez.
\newblock Mobilequant: Mobile-friendly quantization for on-device language models.
\newblock In {\em Findings of the Association for Computational Linguistics: EMNLP 2024}, pages 9761--9771, 2024.

\bibitem{wei2023outlier}
Xiuying Wei, Yunchen Zhang, Yuhang Li, Xiangguo Zhang, Ruihao Gong, Jinyang Guo, and Xianglong Liu.
\newblock Outlier suppression+: Accurate quantization of large language models by equivalent and optimal shifting and scaling.
\newblock {\em arXiv preprint arXiv:2304.09145}, 2023.

\bibitem{wen2024autodroidllmpoweredtaskautomation}
Hao Wen, Yuanchun Li, Guohong Liu, Shanhui Zhao, Tao Yu, Toby Jia-Jun Li, Shiqi Jiang, Yunhao Liu, Yaqin Zhang, and Yunxin Liu.
\newblock Autodroid: Llm-powered task automation in android, 2024.

\bibitem{xiao2024smoothquantaccurateefficientposttraining}
Guangxuan Xiao, Ji~Lin, Mickael Seznec, Hao Wu, Julien Demouth, and Song Han.
\newblock Smoothquant: Accurate and efficient post-training quantization for large language models, 2024.

\bibitem{xu2024fastondevicellminference}
Daliang Xu, Hao Zhang, Liming Yang, Ruiqi Liu, Gang Huang, Mengwei Xu, and Xuanzhe Liu.
\newblock Fast on-device llm inference with npus, 2024.

\bibitem{xu2023qa}
Yuhui Xu, Lingxi Xie, Xiaotao Gu, Xin Chen, Heng Chang, Hengheng Zhang, Zhengsu Chen, Xiaopeng Zhang, and Qi~Tian.
\newblock Qa-lora: Quantization-aware low-rank adaptation of large language models.
\newblock {\em arXiv preprint arXiv:2309.14717}, 2023.

\bibitem{yang2025qwen3}
An~Yang, Anfeng Li, Baosong Yang, Beichen Zhang, Binyuan Hui, Bo~Zheng, Bowen Yu, Chang Gao, Chengen Huang, Chenxu Lv, et~al.
\newblock Qwen3 technical report.
\newblock {\em arXiv preprint arXiv:2505.09388}, 2025.

\bibitem{qwen2025qwen25technicalreport}
An~Yang, Baosong Yang, Beichen Zhang, Binyuan Hui, Bo~Zheng, Bowen Yu, Chengyuan Li, Dayiheng Liu, Fei Huang, Haoran Wei, and et~al.
\newblock Qwen2.5 technical report, 2025.

\bibitem{zellers2019hellaswagmachinereallyfinish}
Rowan Zellers, Ari Holtzman, Yonatan Bisk, Ali Farhadi, and Yejin Choi.
\newblock Hellaswag: Can a machine really finish your sentence?, 2019.

\bibitem{zhang2024plug}
Yingtao Zhang, Haoli Bai, Haokun Lin, Jialin Zhao, Lu~Hou, and Carlo~Vittorio Cannistraci.
\newblock Plug-and-play: An efficient post-training pruning method for large language models.
\newblock 2024.

\bibitem{zhao2024atom}
Yilong Zhao, Chien-Yu Lin, Kan Zhu, Zihao Ye, Lequn Chen, Size Zheng, Luis Ceze, Arvind Krishnamurthy, Tianqi Chen, and Baris Kasikci.
\newblock Atom: Low-bit quantization for efficient and accurate llm serving.
\newblock {\em Proceedings of Machine Learning and Systems}, 6:196--209, 2024.

\end{thebibliography}

%%%%%%%%%%%%%%%%%%%%%%%%%%%%%%%%%%%%%%%%%%%%%%%%%%%%%%%%%%%%

\appendix

\section{Related Work}
Quantization is widely recognized as one of the most practical techniques for deploying large language models (LLMs) on resource-constrained platforms, as it effectively compresses model parameters and reduces computational and memory overhead. Compared to pruning-based approaches~\cite{ma2023llmprunerstructuralpruninglarge, sun2024simpleeffectivepruningapproach, zhang2024plug}, quantization does not alter the network architecture and often yields superior performance.

Early studies primarily focused on weight-only quantization, where model weights are represented with low-bit precision while activations remain in full precision. Representative methods such as GPTQ~\cite{frantar2023gptqaccurateposttrainingquantization} attain high accuracy at low bit-width by minimizing layer-wise reconstruction error. However, weight-only quantization suffers from an inherent limitation: during inference, full-precision activations prevent the use of fully integer matrix multiplication operators for efficient inference acceleration.

To overcome this bottleneck, recent research has shifted toward weight-activation quantization, which quantizes both weights and activations to enable fully integer computation. A key challenge in this setting is that activations in modern LLMs exhibit heavy-tailed distributions with extreme outliers, leading to substantial accuracy degradation under low-bit quantization. To address this issue, numerous methods have been proposed to mitigate the impact of activation outliers. Existing approaches can be broadly categorized into post-training quantization (PTQ) and quantization-aware training (QAT).

\noindent \textbf{Post-training quantization.}
PTQ applies quantization to pretrained models without further training. One line of work explores mixed-precision strategies~\cite{dettmers2022llmint88bitmatrixmultiplication, zhao2024atom}, which assign higher bit-widths to layers or tensors that are more sensitive to quantization error, particularly those affected by severe outliers. Another line of research focuses on scaling-based methods~\cite{xiao2024smoothquantaccurateefficientposttraining, shao2024omniquantomnidirectionallycalibratedquantization, wei2023outlier}, which balance the magnitudes of weights and activations via per-channel scaling and shifting to reduce quantization error. However, such scaling strategies often increase the complexity of weight quantization and tend to degrade significantly under extremely low-bit settings. Inspired by SmoothQuant~\cite{xiao2024smoothquantaccurateefficientposttraining}, AWQ~\cite{lin2024awq} proposes an activation-aware per-channel scaling method to reduce weight quantization error. More recently, rotation-based approaches\cite{ashkboos2024quarotoutlierfree4bitinference, liu2025spinquantllmquantizationlearned} have demonstrated strong effectiveness: by applying orthogonal or Hadamard transformations, activation outliers can be redistributed across channels through matrix multiplication, thereby alleviating extreme values and improving quantization robustness. Notably, QuaRot~\cite{ashkboos2024quarotoutlierfree4bitinference} is the first work to introduce Hadamard transforms for LLMs.

\noindent \textbf{Quantization-aware training.}
Quantization-Aware Training (QAT) methods~\cite{gupta2015deep, jacob2018quantization, esser2020learnedstepsizequantization, bhalgat2020lsqimprovinglowbitquantization, nagel2022overcoming} simulate quantization effects during the training phase, allowing
the model to find more optimal solutions compared to Post-Training Quantization (PTQ). However, traditional QAT approaches are severely bottlenecked by prolonged training times, massive memory footprints, and a strict reliance on labeled data and complex hyperparameter tuning. These formidable computational overheads render them largely impractical for modern Large Language Models (LLMs). To mitigate this, recent studies~\cite{liu2024llm, du2024bitdistiller, chen2025efficientqat, dettmers2023qlora, xu2023qa, bondarenko2024low} have explored efficient QAT variants tailored for LLMs.

\section{Comparison with Other Initialization Methods}
\label{Appendix: act init}
For low-bit coarse-grained activation quantization (e.g., 4-bit per-tensor), it is crucial to identify a stable method for initializing quantization parameters in order to provide a favorable starting point for optimization and to promote convergence. Under such low-bit and coarse-grained settings, quantization performance is particularly vulnerable to outliers. Accordingly, we evaluate initialization methods based on mean statistics as well as various common clipping strategies. The resulting model perplexity (PPL) on the Wikitext2 dataset is reported in \cref{tab:mean_vs_clip}.

We further analyze different clipping algorithms with varying thresholds and observe that fixed thresholds are not universally applicable to all activation distributions. They may lead to either overly aggressive or insufficient clipping, both of which harm model performance. In contrast, the Mean-based initialization method, which dynamically adapts the clipping range to the distribution, achieves the best initialization effect. This finding also provides empirical support for the motivation behind introducing learnable clipping thresholds in the prior work\cite{sun2025flatquantflatnessmattersllm}.

Moreover, we observe that under low-bit coarse-grained activation quantization, omitting rotation substantially increases the quantization difficulty. In this case, none of the initialization methods yield satisfactory performance. In contrast, when rotation is applied, the Mean-based initialization achieves the best results. Overall, the Mean-based initialization method is justified and demonstrates strong effectiveness.

\begin{table*}[t]
\centering
\small
\caption{Matrix multiplication speedup under different quantization configurations.}
\label{tab:mm_overhead}
\begin{tabular}{c c | c c | c}
\toprule
A. Gran. & A. Prec. & W. Gran. & W. Prec. & Speedup \\
\midrule
-- & FP & -- & FP & $1.0\times$ \\
\midrule
per-tensor & 16-bit & per-block   & 4-bit & $1.67\times$ \\
per-tensor & 8-bit & per-channel & 4-bit & $2.0\times$ \\
per-tensor & 8-bit & per-tensor  & 4-bit & $2.0\times$ \\
\bottomrule
\end{tabular}
\end{table*}

\begin{table*}[t]
\centering
\small
\setlength{\tabcolsep}{4pt}
\renewcommand{\arraystretch}{1.15}
\caption{PPL on Wikitext2 for SmolLM2-1.7B-Instruct under 4-bit per-tensor activation quantization, comparing Mean-based initialization with various common clipping thresholds, with and without rotation.}
\label{tab:mean_vs_clip}
\begin{adjustbox}{max width=0.97\linewidth}
\begin{tabular}{c | c c c c} 
\toprule
Activation-Config & Mean & Clip(99\%) & Clip(99.9\%) & Clip(99.99\%)\\
\midrule
4-bit + with rotation & \cellcolor{lightred}30.27 & 2838.81 & 145.63 & 183.15 \\
4-bit + without rotation & 6550995.0 & 18466806.0 & 502306.38 & 47484904.0 \\
\bottomrule
\end{tabular}
\end{adjustbox}
\end{table*}

\section{Quantization Performance under different configurations}
\label{Appendix: rule}
As shown in Table~\ref{tab:Activation-Config}, for Llama3.2-3B-Instruct, we restrict quantization to the input activations of linear layers. This setting facilitates a systematic analysis of different quantization configurations, such as the presence or absence of rotation and the activation bit-width allowing a clearer evaluation of how various initialization strategies affect quantized model performance. Model performance is measured by perplexity (PPL) on the C4 dataset.

\begin{table*}[t]
\centering
\small
\setlength{\tabcolsep}{4pt}
\renewcommand{\arraystretch}{1.15}
\caption{Perplexity (PPL) on C4 for activation quantization under different configurations and initialization methods.}
\label{tab:Activation-Config}
\begin{adjustbox}{max width=0.97\linewidth}
\begin{tabular}{c | c c} 
\toprule
Activation-Config & Max-Min & Mean \\
\midrule

8-bit + without rotation & \cellcolor{lightred}33.24 & 12651.84 \\
4-bit + without rotation & 276217.28 & 12833.83 \\
8-bit + with rotation & \cellcolor{lightred}18.55 & \cellcolor{lightred}25.37 \\
4-bit + with rotation & 548.27 & \cellcolor{lightred}43.68 \\
\bottomrule
\end{tabular}
\end{adjustbox}
\end{table*}

We further validate this conclusion on weight quantization by repeating the same experimental settings and evaluations. The results, also measured by PPL on C4, exhibit the same pattern, and are summarized in Table~\ref{tab:Weight-Config}.

\begin{table*}[t]
\centering
\small
\setlength{\tabcolsep}{4pt}
\renewcommand{\arraystretch}{1.15}
\caption{Perplexity (PPL) on C4 for weight quantization under different configurations and initialization methods.}
\label{tab:Weight-Config}
\begin{adjustbox}{max width=0.97\linewidth}
\begin{tabular}{c | c c} 
\toprule
Weight-Config & Max-Min & Mean \\
\midrule

8-bit + without rotation & \cellcolor{lightred}17.44 & 1646.30 \\
4-bit + without rotation & 363402.22 & 3791.84 \\
8-bit + with rotation & \cellcolor{lightred}16.79 & \cellcolor{lightred}17.67 \\
4-bit + with rotation & 5157.11 & \cellcolor{lightred}22.38 \\

\bottomrule
\end{tabular}
\end{adjustbox}
\end{table*}

\section{Theoretical Explanation of Mean-Based Initialization with Rotation}
\label{Appendix: prove}
In this section, we provide a theoretical explanation and analysis of the quantization parameters initialization methodology introduced in \cref{sec:init}. Some conclusions are intuitive and do not require extensive derivation. For instance, under 8-bit quantization, the quantizer possesses sufficient representational capacity. Therefore, whether or not a rotation matrix is introduced, the quantization parameters can be initialized using the Max–Min method. In contrast, under 4-bit quantization, the representational capacity is limited, and even with a rotation matrix, Max–Min initialization still leads to substantial quantization error.

More importantly, we observe a key phenomenon: under both 4-bit and 8-bit quantization, the Mean-based initialization method becomes stable and effective only after the introduction of rotation. In the following, we provide a theoretical explanation for this observation.

\subsection{Effect of Orthogonal Rotation on Activation Statistics}

\begin{assumption}
\label{ass:xfinite}
Let the unrotated activation be flattened into a vector
$x \in \mathbb{R}^N$ with zero mean, i.e.,
$\mathrm{mean}(x) = 0$.
\end{assumption}

\begin{definition}[Orthogonal Transformation of Activations]
\label{def:inj}
Let $x \in \mathbb{R}^N$ be a random vector satisfying
$\mathbb{E}[x_i] = 0$ for all $i$.
Let $R \in \mathbb{R}^{N \times N}$ be an orthogonal matrix such that
$R^\top R = I$, and define $y = R x$.
\end{definition}

\begin{proposition}
\label{prop:rotation_stats}
Under Assumption~\ref{ass:xfinite} and Definition~\ref{def:inj},
the quantities $\mathrm{mean}(y) \pm 3\,\mathrm{std}(y)$
are equal to $\mathrm{mean}(x) \pm 3\,\mathrm{std}(x)$
in expectation.
\end{proposition}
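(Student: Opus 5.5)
The plan is to reduce everything to two invariants of the orthogonal map $y = Rx$: the empirical mean (which becomes zero in expectation) and the Euclidean norm (which is preserved exactly). Here, as in the Mean-based rule of \cref{sec:init}, $\mathrm{mean}(v) = \frac{1}{N}\sum_i v_i$ and $\mathrm{std}(v)^2 = \frac{1}{N}\sum_i (v_i - \mathrm{mean}(v))^2$ denote the statistics taken over the flattened entries of a tensor.

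\textbf{Step 1 (mean).} By linearity, $\mathbb{E}[y_i] = \sum_j R_{ij}\mathbb{E}[x_j] = 0$ for every $i$, using Definition~\ref{def:inj}. Hence
\[
\mathbb{E}[\mathrm{mean}(y)] = \frac{1}{N}\sum_i \mathbb{E}[y_i] = 0 .
\]
By Assumption~\ref{ass:xfinite}, $\mathrm{mean}(x)=0$, so the centers of the two intervals coincide in expectation.

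\textbf{Step 2 (second moment).} Orthogonality gives
\[
\|y\|_2^2 = x^\top R^\top R x = \|x\|_2^2
\]
deterministically. Since $\mathrm{std}(x)^2 = \frac{1}{N}\|x\|_2^2$ when $\mathrm{mean}(x)=0$, we have
\[
\mathrm{std}(y)^2 = \frac{1}{N}\|y\|_2^2 - \mathrm{mean}(y)^2 = \mathrm{std}(x)^2 - \mathrm{mean}(y)^2 .
\]
So the problem reduces to controlling $\mathrm{mean}(y)^2$.

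\textbf{Step 3 (main obstacle).} The hard part is that $\mathrm{mean}(y) = \frac{1}{N}\mathbf{1}^\top R x$ need not vanish pathwise, and $\mathbb{E}[\mathrm{std}(y)]$ is an expectation of a square root. I would handle this in two ways.

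First, for the rotations actually used ($R_1, R_2$ built from normalized Hadamard matrices), the vector $R^\top \mathbf{1}$ is typically concentrated on a single coordinate, e.g.\ $\sqrt{N}e_1$ for Sylvester-type Hadamard with the standard normalization. This gives $\mathrm{mean}(y) = x_1/\sqrt{N}$, so $\mathrm{mean}(y)^2 = O(\|x\|_\infty^2/N)$, which is negligible relative to $\mathrm{std}(x)^2$ for large $N$.

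Second, for the statement "in expectation," I would note that $\mathbb{E}[\mathrm{mean}(y)]=0$, and bound
\[
\mathbb{E}[\mathrm{mean}(y)^2] = \frac{1}{N^2}\,\mathbf{1}^\top R\,\Sigma\,R^\top \mathbf{1} \le \frac{\lambda_{\max}(\Sigma)}{N},
\]
where $\Sigma = \mathrm{Cov}(x)$. This shows $\mathbb{E}[\mathrm{std}(y)^2] = \mathbb{E}[\mathrm{std}(x)^2] - O(1/N)$, and hence $\mathbb{E}[\mathrm{std}(y)] = \mathbb{E}[\mathrm{std}(x)]$ up to a vanishing term.

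\textbf{Step 4 (combine).} Putting Steps 1–3 together, $\mathbb{E}[\mathrm{mean}(y)\pm 3\,\mathrm{std}(y)] = \mathrm{mean}(x)\pm 3\,\mathrm{std}(x)$. This holds exactly if one uses the second-moment (root-mean-square) form of $\mathrm{std}$, and up to an $O(1/N)$ correction otherwise.

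I would then remark on what this means for \cref{sec:init}: the rotation leaves the Mean-based initialization $s_{init}$ unchanged in expectation, while it reshapes the distribution of $y$ to be near-Gaussian. That is exactly the regime in which the $\pm 3\sigma$ window covers almost all mass, whereas for heavy-tailed unrotated $x$ the same window clips the outliers.
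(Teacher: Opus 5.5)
\textbf{Comparison with the paper's proof.} Your argument uses the same two invariants as the paper's proof: linearity for the mean, $\|Rx\|_2=\|x\|_2$ for the spread, then the $3\sigma$ window. Your Step~1 is cleaner. The paper writes $\mathrm{mean}(y)=\frac1N v^\top x$ with $v=R^\top\mathbf{1}$ and invokes independence of $v$ and $x$, which is unnecessary for a fixed $R$.

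The substantive difference is in the spread. The paper goes from $\mathbb{E}[\mathrm{mean}(x)]=0$ straight to $\mathbb{E}[\mathrm{std}(x)]=\sqrt{\|x\|_2^2/N}=\mathbb{E}[\mathrm{std}(y)]$. That silently sets the pathwise quantity $\mathrm{mean}(y)^2$ to zero, although only $\mathbb{E}[\mathrm{mean}(y)]=0$ is known. Your identity $\mathrm{std}(y)^2=\mathrm{std}(x)^2-\mathrm{mean}(y)^2$ shows that, for the centered standard deviation, $\mathbb{E}[\mathrm{std}(y)]\le\mathbb{E}[\mathrm{std}(x)]$. Equality holds only when $\mathrm{mean}(y)=0$ almost surely; for a Sylvester Hadamard this means $x_1=0$ almost surely.

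So your conclusion is the correct form of the proposition. It is exact for the root-mean-square spread, which is what the paper's $\sqrt{\|x\|_2^2/N}$ actually computes. It is only approximate for the centered spread. The paper's version is shorter, but it gets there through that unjustified step.

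\textbf{Points to tighten in Step~3.} First, the passage from $\mathbb{E}[\mathrm{std}^2]$ to $\mathbb{E}[\mathrm{std}]$ is not automatic. Let $a=\mathrm{std}(x)$ and $m=\mathrm{mean}(y)$. Cauchy--Schwarz gives $m^2\le\|y\|_2^2/N=a^2$, hence $0\le a-\sqrt{a^2-m^2}\le|m|$, and
\[
0\le\mathbb{E}[\mathrm{std}(x)]-\mathbb{E}[\mathrm{std}(y)]\le\sqrt{\mathbb{E}[m^2]}\le\sqrt{\lambda_{\max}(\Sigma)/N}.
\]
This is only $O(N^{-1/2})$. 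The $O(1/N)$ claimed in Step~4 needs a lower bound $a\ge a_0>0$ together with the estimate $a-\sqrt{a^2-m^2}\le m^2/a$.

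Second, smallness should be measured relative to $\mathbb{E}[\mathrm{std}(x)^2]=\mathrm{tr}(\Sigma)/N$. The relative error is then at most $\lambda_{\max}(\Sigma)/\mathrm{tr}(\Sigma)$. This is not small when $x$ has a few dominant outlier channels, which is exactly the unrotated regime, unless $R^\top\mathbf{1}$ avoids them. For the Sylvester matrix, $\mathbb{E}[m^2]=\Sigma_{11}/N$. If instead $R$ is Haar-distributed and independent of $x$ (the situation the paper's independence assumption gestures at), then $\mathbb{E}[R^\top\mathbf{1}\mathbf{1}^\top R]=I$. This gives $\mathbb{E}[\mathrm{mean}(y)^2]=\mathrm{tr}(\Sigma)/N^2=\mathbb{E}[\mathrm{std}(x)^2]/N$, a genuinely relative $1/N$ error.

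Finally, the right-hand side of Step~4 should read $\pm3\,\mathbb{E}[\mathrm{std}(x)]$, since $x$ is random.
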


\begin{proof}
We first analyze the mean of $y$.
By definition,
\[
\mathrm{mean}(y) = \frac{1}{N} \mathbf{1}^\top y
= \frac{1}{N} \mathbf{1}^\top R x .
\]
Let $v = R^\top \mathbf{1}$, so that $\mathbf{1}^\top R = v^\top$.
Then
\[
\mathrm{mean}(y) = \frac{1}{N} v^\top x .
\]
Assuming independence between $v_i$ and $x_i$, together with
$\mathbb{E}[x_i] = 0$, we obtain
\[ 
\mathbb{E}[\mathrm{mean}(y)] 
= \frac{1}{N} \mathbb{E}[v^T x]
= \frac{1}{N} \sum_{i=1}^N \mathbb{E}[v_i x_i] 
= \frac{1}{N} \sum_{i=1}^N \mathbb{E}[v_i] \mathbb{E}[x_i] 
= 0 
\]

Next, we consider the standard deviation.
By orthogonality of $R$, we have
\[
\|y\|_2^2 = \|R x\|_2^2 = x^\top R^\top R x = \|x\|_2^2 .
\]
The standard deviation of $x$ is given by
\[
\mathrm{std}(x)
= \sqrt{\frac{1}{N} \|x\|_2^2 - \mathrm{mean}(x)^2}.
\]
Since $\mathbb{E}[\mathrm{mean}(x)] = 0$, this yields
\[
\mathbb{E}[\mathrm{std}(x)]
= \sqrt{\frac{1}{N} \|x\|_2^2}
= \mathbb{E}[\mathrm{std}(y)] .
\]

Under the commonly adopted $3\sigma$ approximation of the dynamic range,
\[
\mathrm{max}(z) \approx \mathrm{mean}(z) + 3\,\mathrm{std}(z), \quad
\mathrm{min}(z) \approx \mathrm{mean}(z) - 3\,\mathrm{std}(z),
\]
the equality of mean and standard deviation implies that the effective quantization range is preserved in expectation.
\end{proof}

\begin{figure*}[t]
  \centering
  % ---------- Row 1 ----------
  \begin{subfigure}[t]{0.24\linewidth}
    \centering
    \includegraphics[width=\linewidth]{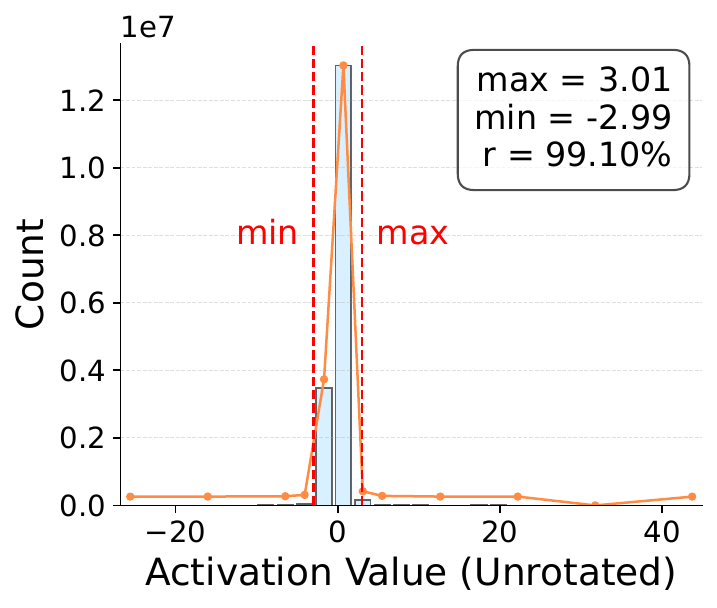}
    \caption{Unrotated activation distribution for $W_q$ of the $10^{th}$ transformer layer.}
    \label{fig:UR_1}
  \end{subfigure}
  \hfill
  \begin{subfigure}[t]{0.24\linewidth}
    \centering
    \includegraphics[width=\linewidth]{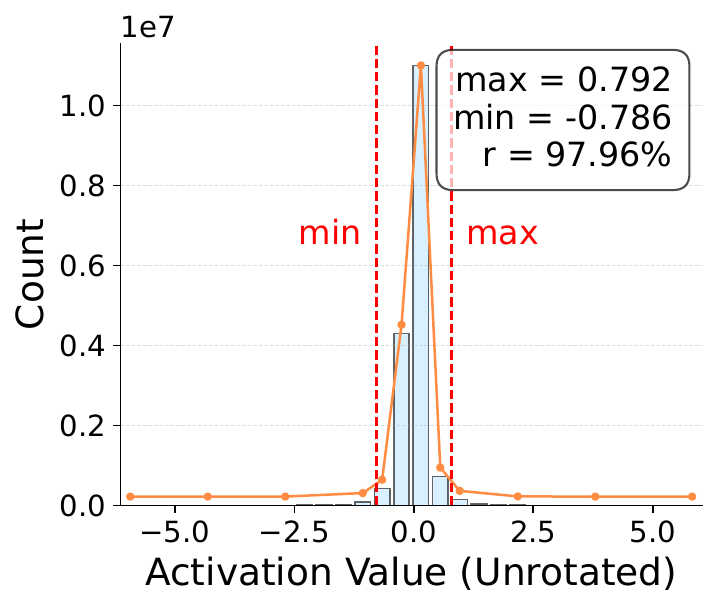}
    \caption{Unrotated activation distribution for $W_o$ of the $10^{th}$ transformer layer.}
    \label{fig:UR_2}
  \end{subfigure}
  \hfill
  \begin{subfigure}[t]{0.24\linewidth}
    \centering
    \includegraphics[width=\linewidth]{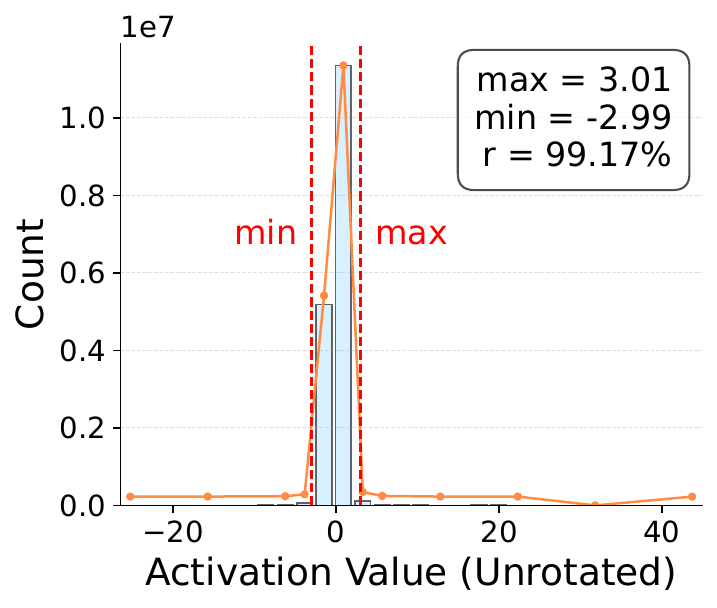}
    \caption{Unrotated activation distribution for $W_{up}$ of the $10^{th}$ transformer layer.}
    \label{fig:UR_3}
  \end{subfigure}
  \hfill
  \begin{subfigure}[t]{0.24\linewidth}
    \centering
    \includegraphics[width=\linewidth]{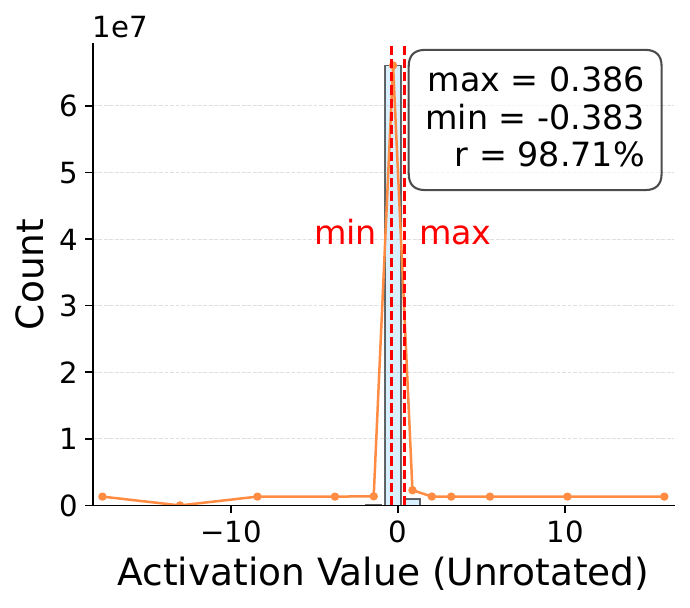}
    \caption{Unrotated activation distribution for $W_{down}$ of the $10^{th}$ transformer layer.}
    \label{fig:UR_4}
  \end{subfigure}

  \vspace{2mm} % 行间距，可调

  % ---------- Row 2 ----------
  \begin{subfigure}[t]{0.24\linewidth}
    \centering
    \includegraphics[width=\linewidth]{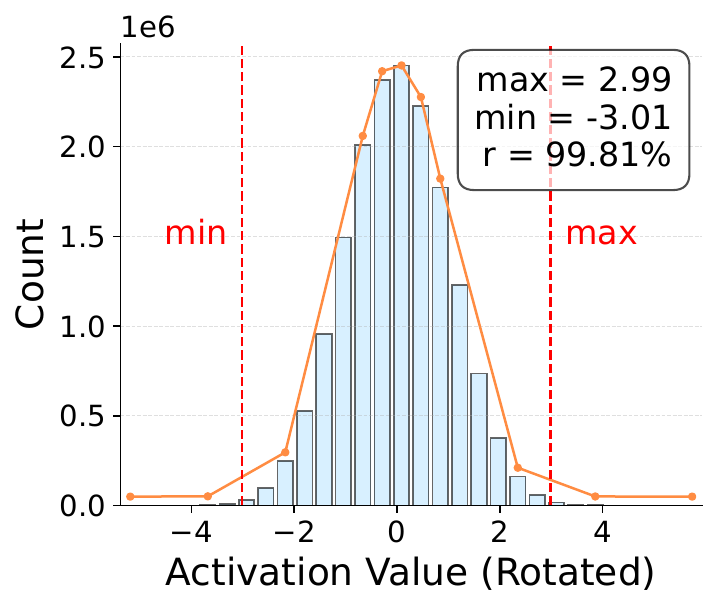}
    \caption{Rotated activation distribution for $W_q$ of the $10^{th}$ transformer layer.}
    \label{fig:R_1}
  \end{subfigure}
  \hfill
  \begin{subfigure}[t]{0.24\linewidth}
    \centering
    \includegraphics[width=\linewidth]{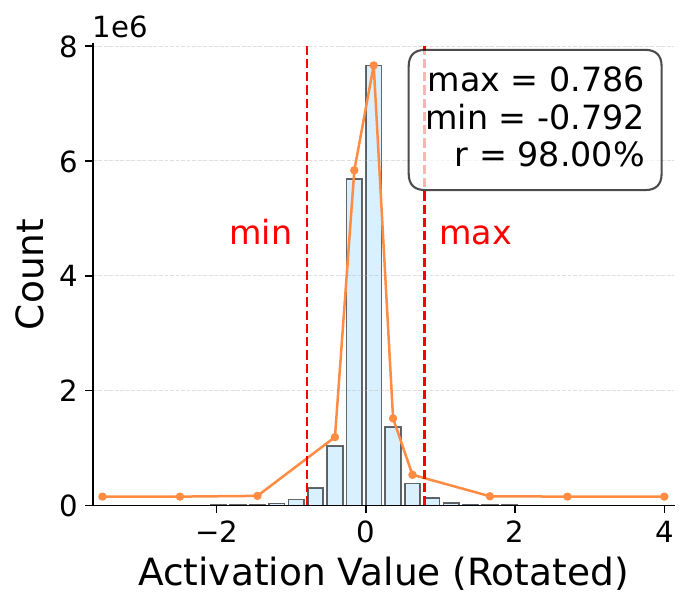}
    \caption{Rotated activation distribution for $W_o$ of the $10^{th}$ transformer layer.}
    \label{fig:R_2}
  \end{subfigure}
  \hfill
  \begin{subfigure}[t]{0.24\linewidth}
    \centering
    \includegraphics[width=\linewidth]{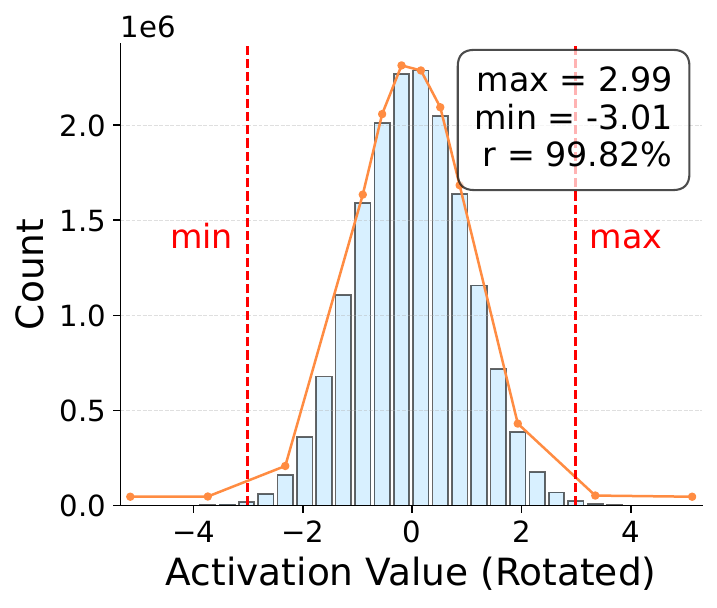}
    \caption{Rotated activation distribution for $W_{up}$ of the $10^{th}$ transformer layer.}
    \label{fig:R_3}
  \end{subfigure}
  \hfill
  \begin{subfigure}[t]{0.24\linewidth}
    \centering
    \includegraphics[width=\linewidth]{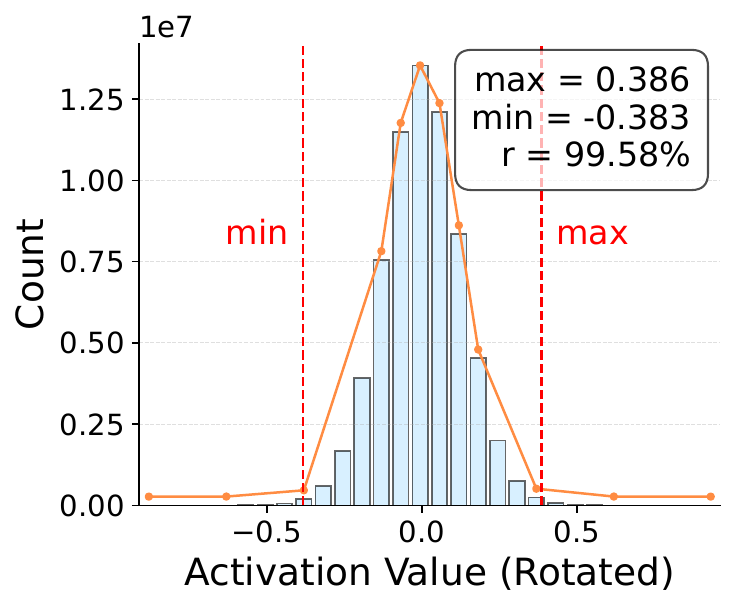}
    \caption{Rotated activation distribution for $W_{down}$ of the $10^{th}$ transformer layer.}
    \label{fig:R_4}
  \end{subfigure}

  \caption{
  Activation distributions of $W_q$, $W_o$, $W_{up}$, and $W_{down}$ in the $10^{th}$ transformer layer of SmolLM2-1.7B-Instruct. ~\cref{fig:UR_1}-~\cref{fig:UR_4} show the unrotated distributions, while ~\cref{fig:R_1}-~\cref{fig:R_4} show the rotated distributions. Blue bars represent the activation distributions. The labels \textit{min} and \textit{max} indicate the boundaries of the quantization range (calculated based on the Mean-based initialization method), and the box in the top-right corner displays the \textit{min/max} values along with the percentage of activations that fall within this range.
  }
  \label{fig:R_UR_8}
\end{figure*}

\subsection{Advantages of Mean-Based Initialization with Rotation}
~\cref{ass:xfinite} requires the activation distribution to have a mean close to zero. However, by visualizing the pre-rotation activations of LLaMA-3.2-3B-Instruct, Qwen2.5-3B-Instruct, and SmolLM2-1.7B-Instruct, we observe that the output activations of the gate\_proj have a mean that significantly deviates from zero, especially in Qwen models. Aside from this exception, all other activations satisfy the assumptions in ~\cref{ass:xfinite}. Therefore, in our subsequent analysis, we exclude gate\_proj activations from low-bit quantization and retain them in BF16 precision. 

To provide a more intuitive illustration of ~\cref{prop:rotation_stats}, we visualize the activation distributions of $W_q$, $W_o$, $W_{up}$, and $W_{down}$ in the $10^{th}$ transformer layer of SmolLM2-1.7B-Instruct, as shown in ~\cref{fig:R_UR_8}. From the visualization, we summarize the advantages of combining Mean-based initialization with rotation as follows.

For activations within the quantization coverage range, the numerical values are relatively small, allowing the dequantized values to closely approximate the original distribution regardless of whether a rotation is applied. Consequently, the quantization error for this subset is typically concentrated near zero, indicating high quantization accuracy. In contrast, activations that fall outside the coverage range (i.e., outliers) incur substantial quantization errors. Without a rotation matrix, these outliers have large magnitudes, and the information they carry is severely truncated during quantization, leading to noticeable performance degradation. Introducing a rotation matrix smooths the distribution, mapping more activations into the coverage range and reducing the magnitude of outliers, thereby mitigating their impact. Hence, Mean-based initialization becomes effective only after rotation; applying it directly to the unrotated activations would result in large quantization errors.

\begin{figure*}[t]
  \centering
  \begin{subfigure}[t]{0.24\linewidth}
    \centering
    \includegraphics[width=\linewidth]{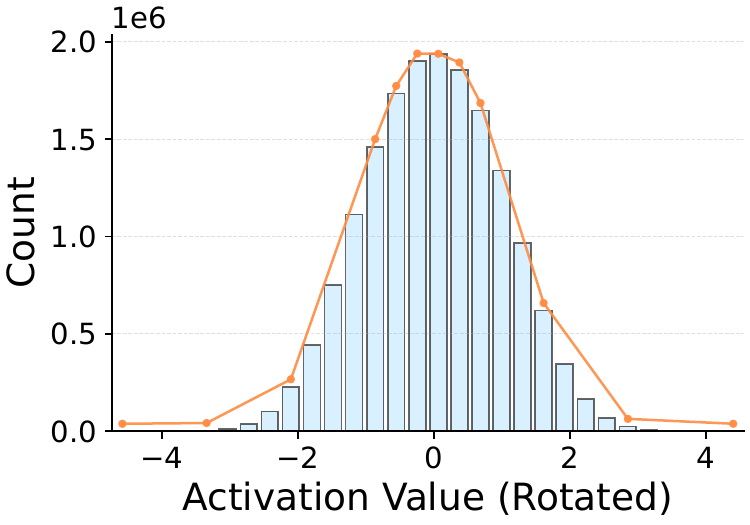}
    \caption{Rotated activation distribution for $W_q$ of the $19^{th}$ transformer layer.}
    \label{fig:q_19}
  \end{subfigure}
  \hfill
  \begin{subfigure}[t]{0.24\linewidth}
    \centering
    \includegraphics[width=\linewidth]{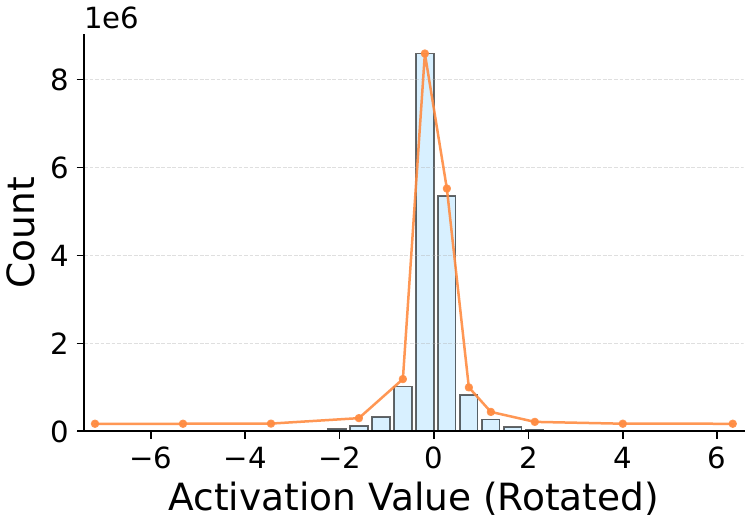}
    \caption{Rotated activation distribution for $W_o$ of the $19^{th}$ transformer layer.}
    \label{fig:o_19}
  \end{subfigure}
  \hfill
  \begin{subfigure}[t]{0.24\linewidth}
    \centering
    \includegraphics[width=\linewidth]{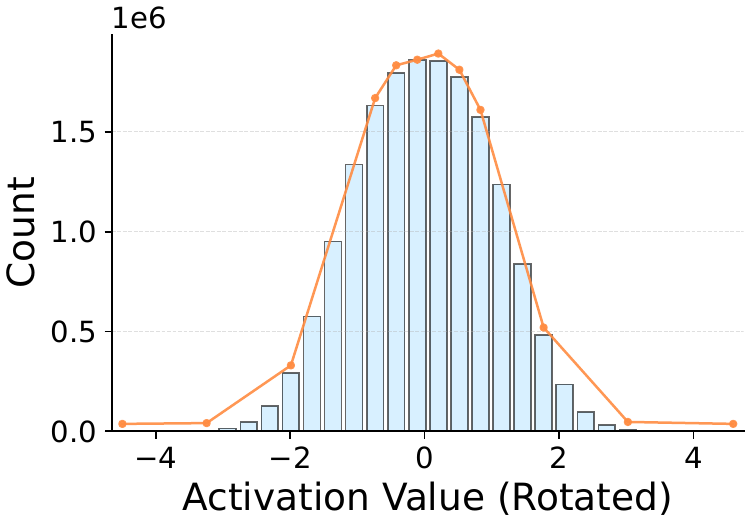}
    \caption{Rotated activation distribution for $W_{up}$ of the $19^{th}$ transformer layer.}
    \label{fig:up_19}
  \end{subfigure}
   \hfill
  \begin{subfigure}[t]{0.24\linewidth}
    \centering
    \includegraphics[width=\linewidth]{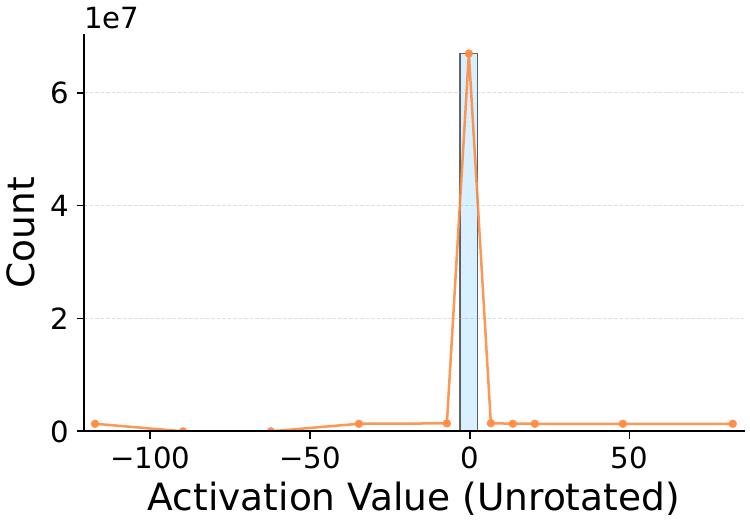}
    \caption{Unrotated activation distribution for $W_{down}$ of the $19^{th}$ transformer layer.}
    \label{fig:down_19}
  \end{subfigure}
  
  \caption{
  Activation distributions of $W_q$, $W_o$, $W_{up}$, and $W_{down}$ in the $19^{th}$ transformer layer of SmolLM2-1.7B-Instruct. ~\cref{fig:q_19}-~\cref{fig:up_19} show the rotated distributions of $W_q$, $W_o$, and $W_{up}$, while ~\cref{fig:down_19} shows the unrotated distributions of $W_{down}$. 
  }
  \label{fig:down_outlier}
\end{figure*}

\section{Theoretical Analysis on the Failure of Gradient-Based Optimization for Unrotated Tensors}
\label{Appendix: Gradient}

In ~\cref{sec:train}, we point out that directly applying gradient-based optimization to the quantization parameters of unrotated tensors introduces significant instability, often resulting in worse model accuracy than using unoptimized parameters. In this section, we further analyze the fundamental nature of quantization error optimization and provide a theoretical analysis for the optimization failure caused by the conflict between clipping error and rounding error under heavy-tailed distributions.

The optimization of quantization parameters essentially aims to find a theoretical equilibrium point that minimizes the total quantization error, denoted as
\begin{align} 
E_{\text{total}}(s)=E_{\text{rounding}}(s)+E_{\text{clipping}}(s).
\end{align} 
For a random input tensor, the total quantization error can be rigorously decomposed into two components: the rounding error, produced when values are mapped to the nearest quantization grid, and the clipping error, caused by truncating values that exceed the representable integer range. The quantization scale \(s\) directly governs the trade-off between these two types of errors.

For rotated tensors that approximately follow a Gaussian distribution, most values are concentrated around the mean. Under such distributions, the trade-off curve between \(E_{\text{rounding}}\) and \(E_{\text{clipping}}\) varies smoothly as \(s\) changes. Consequently, the optimizer can relatively easily locate a global optimum. As a result, the quantization parameters can be optimized in a stable manner in this setting.

In contrast, unrotated tensors typically exhibit heavy-tailed distributions, where the majority of values are densely concentrated near zero while a small number of outliers are distributed far from the center. During optimization, reducing \(E_{\text{clipping}}\) for these extreme outliers forces the optimizer to enlarge the quantization scale \(s\). However, increasing \(s\) compresses the densely populated central region into only a few quantization bins near zero, leading to a dramatic increase in \(E_{\text{rounding}}\). Conversely, if the optimizer reduces \(s\) to preserve the quantization precision of the dominant central values and minimize \(E_{\text{rounding}}\), the large-magnitude outliers become severely clipped, significantly increasing \(E_{\text{clipping}}\).

Therefore, under the extremely limited representation capacity of low-bit quantization, the severe polarization inherent in heavy-tailed distributions causes the loss landscape of \(E_{\text{total}}(s)\) to contain numerous local minima. Consequently, gradient-based optimization through backpropagation becomes highly unstable. This observation further motivates the necessity of adopting a distribution-aware selective optimization strategy for tensors with different statistical characteristics, thereby mitigating the adverse effects introduced by heavy-tailed distributions during joint optimization.

\section{Down\_proj: Outlier-Dominated Activation Distribution}
\label{Appendix: down_proj outlier}
For the SmolLM2-1.7B-Instruct model, we visualized the distributions of input activations for q\_proj (k\_proj and v\_proj), o\_proj, up\_proj (gate\_proj), and down\_proj in \sys. To avoid the inference latency issue discussed in \cref{sec:bkg:r-qat}, we disabled all online rotations (R4) in our method. As a result, the input activations of down\_proj remain in their original, unrotated form, whereas the input activations of the remaining linear layers are smoothed via rotation.

As shown in ~\cref{fig:down_outlier}, and consistent with prior work \cite{xu2024fastondevicellminference}, the input activations of down\_proj layers exhibit particularly severe outliers: the numerical range is extensive, and a large portion of values concentrates near zero. To preserve quantization accuracy for these near-zero values, important information carried by the outliers may be lost. This ultimately leads to a substantial drop in model accuracy.

\section{Additional Experimental Results}
\label{Appendix: Additional Experimental}

\begin{table*}[!t]
\centering
\small
\setlength{\tabcolsep}{4pt}
\renewcommand{\arraystretch}{1.15}
\caption{\textbf{Evaluation on additional widely used models (Qwen2.5-3B-Instruct and SmolLM2-1.7B-Instruct)}: Perplexity(PPL) and Zero-shot QA task accuracy results of 4-bit weight or 8-bit weight quantized models.}
\label{tab:zero_shot_qa_qwen_smollm}
\begin{adjustbox}{max width=0.97\linewidth}
\begin{tabular}{c | c c c | c | c c c c c c c} 
\toprule
Model & Method & W-A-KV & W. Gran. & PPL(C4) $\downarrow$ & PIQA & Winogrande & HellaSwag & ARC-E & ARC-C & LAMBADA & Avg $\uparrow$ \\
\midrule

% ===================== Qwen2.5-3B-Instruct =====================
\multirow{14}{*}{\makecell{Qwen2.5\\-3B-it}}
& FP32       & --    & -- & 13.84 & 0.7807 & 0.6914 & 0.7330 & 0.5585 & 0.4249 & 0.5808 & 0.6282 \\
& executorch & 4-16-8 & per-blcok & 15.48 & 0.7606 & 0.6788 & 0.7172 & 0.5522 & 0.4317 & 0.5579 & 0.6164 \\
& executorch  & 8-8-8 & per-channel & 14.68 & 0.7497 & 0.6725 & 0.7203 & \textbf{0.5619} & 0.4181 & 0.5707 & 0.6155 \\
& QuaRot  & 8-8-8 & per-channel & 14.08 & \textbf{0.7824} & \textbf{0.6898} & \textbf{0.7320} & 0.5577 & 0.4164 & 0.5667 & 0.6242 \\
& SpinQuant  & 8-8-8 & per-channel & 14.07 & 0.7737 & 0.6859 & 0.7294 & 0.5421 & 0.4172 & \textbf{0.5824} & 0.6218 \\
& \cellcolor{lightred}Quant.npu & \cellcolor{lightred}8-8-8 & \cellcolor{lightred}per-channel & \cellcolor{lightred}\textbf{13.82} & \cellcolor{lightred}0.7764 & \cellcolor{lightred}0.6867 & \cellcolor{lightred}0.7278 & \cellcolor{lightred}0.5501 & \cellcolor{lightred}\textbf{0.4309} & \cellcolor{lightred}0.5750 & \cellcolor{lightred}\textbf{0.6245} \\
& executorch  & 4-8-8 & per-channel & 28901.54 & 0.5033 & 0.4901 & 0.2518 & 0.2483 & 0.2253 & 0.0 & 0.2865 \\
& QuaRot  & 4-8-8 & per-channel & 19.96 & 0.7220 & 0.6046 & 0.6450 & 0.4815 & 0.3626 & 0.3827 & 0.5331 \\
& SpinQuant  & 4-8-8 & per-channel & 20.61 & 0.7334 & 0.6109 & 0.6414 & \textbf{0.5808} & \textbf{0.4078} & 0.4017 & 0.5627 \\
& \cellcolor{lightred}Quant.npu & \cellcolor{lightred}4-8-8 & \cellcolor{lightred}per-channel & \cellcolor{lightred}\textbf{16.02} & \cellcolor{lightred}\textbf{0.7633} & \cellcolor{lightred}\textbf{0.6685} & \cellcolor{lightred}\textbf{0.6920} & \cellcolor{lightred}0.5101 & \cellcolor{lightred}0.4061 & \cellcolor{lightred}\textbf{0.5053} & \cellcolor{lightred}\textbf{0.5906} \\
& executorch  & 4-4-8 & per-channel & 415330.38 & 0.5000 & 0.5114 & 0.2537 & 0.2424 & 0.2346 & 0.0 & 0.2904 \\
& QuaRot  & 4-4-8 & per-channel & 97.01 & 0.5919 & 0.5043 & 0.3827 & 0.3565 & 0.2517 & 0.0507 & 0.3563 \\
& SpinQuant  & 4-4-8 & per-channel & 101.85 & 0.5860 & 0.5020 & 0.3683 & 0.3418 & 0.2628 & 0.0753 & 0.3560 \\
& \cellcolor{lightred}Quant.npu & \cellcolor{lightred}4-4-8 & \cellcolor{lightred}per-channel & \cellcolor{lightred}\textbf{17.96} & \cellcolor{lightred}\textbf{0.7437} & \cellcolor{lightred}\textbf{0.6227} & \cellcolor{lightred}\textbf{0.6619} & \cellcolor{lightred}\textbf{0.5105} & \cellcolor{lightred}\textbf{0.3874} & \cellcolor{lightred}\textbf{0.4440} &
\cellcolor{lightred}\textbf{0.5617} \\
\midrule    

% ===================== SmolLM2-1.7B-Instruct =====================
\multirow{14}{*}{\makecell{SmolLM2\\-1.7B-it}}
& FP32       & --    & -- & 13.55 & 0.7622 & 0.6835 & 0.7127 & 0.5417 & 0.4181 & 0.5703 & 0.6148 \\
& executorch & 4-16-8 & per-block & 15.35 & 0.7595 & 0.6488 & 0.6933 & 0.5362 & 0.4044 & 0.5178 & 0.5933 \\
& executorch  & 8-8-8 & per-channel & 25.75 & 0.7062 & 0.6188 & 0.6256 & 0.5194 & 0.3933 & 0.3889 & 0.5420 \\
& QuaRot  & 8-8-8 & per-channel & \textbf{13.93} & \textbf{0.7633} & 0.6646 & \textbf{0.7047} & 0.5471 & 0.4036 & 0.5525 & 0.6060 \\
& SpinQuant  & 8-8-8 & per-channel & 13.96 & 0.7628 & \textbf{0.6677} & 0.7020 & \textbf{0.5551} & \textbf{0.4172} & \textbf{0.5616} & \textbf{0.6111} \\
& \cellcolor{lightred}Quant.npu & \cellcolor{lightred}8-8-8 & \cellcolor{lightred}per-channel & \cellcolor{lightred}13.94 & \cellcolor{lightred}0.7601 & \cellcolor{lightred}0.6551 & \cellcolor{lightred}0.6859 & \cellcolor{lightred}0.5396 & \cellcolor{lightred}0.3942 & \cellcolor{lightred}0.5484 & \cellcolor{lightred}0.5972 \\
& executorch  & 4-8-8 & per-channel & 13121503.0 & 0.6202 & 0.5059 & 0.4470 & 0.4171 & 0.3012 & 0.0444 & 0.3893 \\
& QuaRot  & 4-8-8 & per-channel & 120.06 & 0.5794 & 0.5004 & 0.3465 & 0.3283 & 0.2474 & 0.1116 & 0.3523 \\
& SpinQuant  & 4-8-8 & per-channel& 67.05 & 0.6317 & 0.5257 & 0.4105 & 0.3662 & 0.2935 & 0.0974 & 0.3875 \\
& \cellcolor{lightred}Quant.npu & \cellcolor{lightred}4-8-8 & \cellcolor{lightred}per-channel & \cellcolor{lightred}\textbf{18.61} & \cellcolor{lightred}\textbf{0.7312} & \cellcolor{lightred}\textbf{0.6156} & \cellcolor{lightred}\textbf{0.6096} & \cellcolor{lightred}\textbf{0.4865} & \cellcolor{lightred}\textbf{0.3584} & \cellcolor{lightred}\textbf{0.4562} & \cellcolor{lightred}\textbf{0.5436} \\
& executorch  & 4-4-8 & per-channel & 53274784.0 & 0.5016 & 0.5004 & 0.2521 & 0.2500 & 0.2568 & 0.0 & 0.2935 \\
& QuaRot  & 4-4-8 & per-channel & 8553.56 & 0.5163 & 0.5020 & 0.2568 & 0.2563 & 0.2594 & 0.0006 & 0.2986 \\
& SpinQuant  & 4-4-8 & per-channel & 4283.87 & 0.4989 & 0.4893 & 0.2499 & 0.2529 & 0.2261 & 0.0004 & 0.2862 \\
& \cellcolor{lightred}Quant.npu & \cellcolor{lightred}4-4-8 & \cellcolor{lightred}per-channel & \cellcolor{lightred}\textbf{23.56} & \cellcolor{lightred}\textbf{0.6948} & \cellcolor{lightred}\textbf{0.5770} & \cellcolor{lightred}\textbf{0.5342} & \cellcolor{lightred}\textbf{0.4541} & \cellcolor{lightred}\textbf{0.3251} & \cellcolor{lightred}\textbf{0.3497} &
\cellcolor{lightred}\textbf{0.4898} \\
\bottomrule
\end{tabular}
\end{adjustbox}
\end{table*}

\subsection{Results on Additional Widely Used Models}

To further demonstrate the generalizability of \sys, we extend our analysis to other widely deployed models, including Qwen2.5-3B-Instruct~\cite{qwen2025qwen25technicalreport} and SmolLM2-1.7B-Instruct~\cite{allal2025smollm2smolgoesbig}. \cref{tab:zero_shot_qa_qwen_smollm} reports the perplexity (PPL) on C4 and the accuracy on six zero-shot commonsense reasoning benchmarks under W8A8, W4A8, and W4A4 settings. Consistent with the results in ~\cref{Main Results}, \sys achieves zero-shot accuracy comparable to the FP32 baseline under W8A8 quantization on Qwen2.5-3B-Instruct. The performance gains are particularly significant under the W4A8 setting, where \sys achieves a maximum accuracy improvement of up to 30.41\% compared to baseline methods. Moreover, \sys-W4A4 maintains a relatively high accuracy, even in scenarios where the baseline almost entirely fails. For Qwen2.5-3B-Instruct, \sys exhibits only about a 6\% drop in accuracy compared to the FP32 baseline. Furthermore, \sys demonstrates broad applicability by yielding comparable performance gains on SmolLM2-1.7B-Instruct.

\begin{table*}[t]
\centering
\small
\setlength{\tabcolsep}{4pt}
\renewcommand{\arraystretch}{1.15}
\caption{\textbf{Evaluation on recent architectures (Qwen3-1.7B)}: Perplexity(PPL) and Zero-shot QA task accuracy results of 4-bit weight or 8-bit weight quantized models.}
\label{tab:qwen3_results}
\begin{adjustbox}{max width=0.97\textwidth}
\begin{tabular}{c | c c c | c | c c c c c c c} 
\toprule
Model & Method & W-A-KV & W. Gran. & PPL(C4) $\downarrow$ & PIQA & Winogrande & HellaSwag & ARC-E & ARC-C & LAMBADA & Avg $\uparrow$ \\
\midrule

% ===================== Qwen3-1.7B =====================
\multirow{14}{*}{\makecell{Qwen3\\-1.7B}}
& FP32       & --    & -- & 22.64 & 0.7160 & 0.6093 & 0.5932 & 0.5438 & 0.3993 & 0.4731 & 0.5558 \\
& executorch & 4-16-8 & per-block & 31.52 & 0.6654 & 0.5596 & 0.5384 & 0.4844 & 0.3473 & 0.3196 & 0.4858 \\
& executorch  & 8-8-8 & per-channel & 29.99 & 0.6589 & 0.5588 & 0.5294 & 0.4907 & 0.3447 & 0.3468 & 0.4882 \\
& QuaRot  & 8-8-8 & per-channel & 28.91 & 0.6659 & 0.5706 & 0.5456 & 0.5316 & 0.3703 & 0.3833 & 0.5112 \\
& SpinQuant  & 8-8-8 & per-channel & 28.64 & 0.6659 & 0.5880 & 0.5468 & 0.5219 & 0.3601 & 0.3804 & 0.5105 \\
& \cellcolor{lightred}Quant.npu & \cellcolor{lightred}8-8-8 & \cellcolor{lightred}per-channel & \cellcolor{lightred}\textbf{22.74} & \cellcolor{lightred}\textbf{0.6872} & \cellcolor{lightred}\textbf{0.6014} & \cellcolor{lightred}\textbf{0.5710} & \cellcolor{lightred}\textbf{0.5354} & \cellcolor{lightred}\textbf{0.3959} & \cellcolor{lightred}\textbf{0.4430} & \cellcolor{lightred}\textbf{0.5390} \\
& executorch  & 4-8-8 & per-channel & 72.61 & 0.6094 & 0.5059 & 0.4179 & 0.4061 & 0.2705 & 0.1230 & 0.3888 \\
& QuaRot  & 4-8-8 & per-channel & 429.09 & 0.5783 & 0.5059 & 0.3546 & 0.3409 & 0.2602 & 0.1250 & 0.3608 \\
& SpinQuant  & 4-8-8 & per-channel & 548.10 & 0.5637 & 0.5091 & 0.3364 & 0.3173 & 0.2457 & 0.1209 & 0.3489 \\
& \cellcolor{lightred}Quant.npu & \cellcolor{lightred}4-8-8 & \cellcolor{lightred}per-channel & \cellcolor{lightred}\textbf{31.68} & \cellcolor{lightred}\textbf{0.6779} & \cellcolor{lightred}\textbf{0.5604} & \cellcolor{lightred}\textbf{0.5054} & \cellcolor{lightred}\textbf{0.4651} & \cellcolor{lightred}\textbf{0.3311} & \cellcolor{lightred}\textbf{0.2729} & \cellcolor{lightred}\textbf{0.4688} \\
& executorch  & 4-4-8 & per-channel & 491615.22 & 0.4924 & 0.5091 & 0.2506 & 0.2551 & 0.2440 & 0.0 & 0.2919 \\
& QuaRot  & 4-4-8 & per-channel & 6898.98 & 0.4913 & 0.4925 & 0.2717 & 0.2681 & 0.2534 & 0.0099 & 0.2978 \\
& SpinQuant  & 4-4-8 & per-channel & 5018.61 & 0.5169 & \textbf{0.5328} & 0.2760 & 0.2727 & 0.2534 & 0.0151 & 0.3112 \\
& \cellcolor{lightred}Quant.npu & \cellcolor{lightred}4-4-8 & \cellcolor{lightred}per-channel & \cellcolor{lightred}\textbf{36.25} & \cellcolor{lightred}\textbf{0.6279} & \cellcolor{lightred}0.5241 & \cellcolor{lightred}\textbf{0.4677} & \cellcolor{lightred}\textbf{0.4693} & \cellcolor{lightred}\textbf{0.3268} & \cellcolor{lightred}\textbf{0.2199} &
\cellcolor{lightred}\textbf{0.4393} \\
\bottomrule
\end{tabular}
\end{adjustbox}
\end{table*}

\subsection{Results on More Recent Architectures}

To further evaluate the generality of \sys on recent model architectures, we additionally conduct optimization and evaluation on Qwen3-1.7B~\cite{yang2025qwen3} and compare our method against relevant baselines. The detailed results are provided in ~\cref{tab:qwen3_results}.

As shown in ~\cref{tab:qwen3_results}, \sys consistently achieves superior accuracy preservation across different quantization settings. Under the highly constrained W4A4KV8 per-channel setting, \sys achieves an average accuracy of 0.4393 with a PPL(C4) of 36.25, significantly outperforming the baseline, SpinQuant, which suffers severe degradation (Avg: 0.3112, PPL: 5018.61). Even under the W8A8KV8 setting, \sys achieves an average accuracy of 0.5390 and a PPL(C4) of 22.74, closely matching the FP32 model (Avg: 0.5558, PPL: 22.64). These results demonstrate that \sys can be effectively generalized to newer architectures without requiring architecture-specific modifications.

In addition, \sys is practical and easy to deploy. For Qwen3-1.7B, the complete quantization pipeline requires only approximately 2.5 hours on NVIDIA A40 GPUs under the experimental setup described in ~\cref{Experiments}. Adapting \sys to a new model requires no complex model-specific engineering, demonstrating the practicality and portability of our method.

\begin{table*}[t]
\centering
\small
\setlength{\tabcolsep}{4pt}
\renewcommand{\arraystretch}{1.15}
\caption{\textbf{Evaluation on larger LLMs (LLaMA3-8B)}: Perplexity(PPL) and Winogrande accuracy results of 4-bit weight  quantized models.}
\label{tab:llama8b_results}
\begin{adjustbox}{max width=0.97\textwidth}
\begin{tabular}{c | c c c | c c} 
\toprule
Model & Method & W-A-KV & W. Gran. & PPL(C4) $\downarrow$ & Winogrande $\uparrow$ \\
\midrule

% ===================== Qwen3-1.7B =====================
\multirow{6}{*}{\makecell{LLaMA3\\-8B}}
& FP32       & --    & -- & 9.07 & 0.7285 \\
& executorch & 4-16-8 & per-block & 10.35 & 0.7182 \\
& executorch  & 4-8-8 & per-channel & 17.93 & 0.6756 \\
& QuaRot  & 4-8-8 & per-channel & 19.15 & 0.6819 \\
& SpinQuant  & 4-8-8 & per-channel & 22.37 & 0.6448 \\
& \cellcolor{lightred}Quant.npu & \cellcolor{lightred}4-8-8 & \cellcolor{lightred}per-channel & \cellcolor{lightred}11.88 & \cellcolor{lightred}0.7190 \\
\bottomrule
\end{tabular}
\end{adjustbox}
\end{table*}

\subsection{Results on Larger Models}

Our primary evaluation focuses on 1B--3B models, which align with the target deployment scenario of mobile devices with limited memory budgets (typically around 4GB available memory). Accordingly, we evaluates \sys on several representative mobile-scale models, including Llama-3.2-3B-Instruct, Qwen2.5-3B-Instruct, and SmolLM2-1.7B-Instruct.

To further validate the scalability of \sys, we additionally conduct experiments on the substantially larger Llama3-8B model~\cite{grattafiori2024llama3herdmodels}. The detailed results are summarized in ~\cref{tab:llama8b_results}.

As shown in ~\cref{tab:llama8b_results}, \sys consistently maintains strong quantization performance at larger scales. Under the challenging W4A8KV8 per-channel configuration, baseline methods suffer significant degradation, with ExecuTorch, QuaRot, and SpinQuant achieving PPL values of 17.93, 19.15, and 22.37, respectively. In contrast, \sys substantially reduces the PPL to 11.88 while achieving an accuracy of 0.7190 on Winogrande. Notably, this performance not only significantly surpasses all other W4A8 per-channel baselines, but also approaches the higher-precision ExecuTorch-W4A16 per-block setting (PPL: 10.35) and the FP32 model (PPL: 9.07).

These results demonstrate that \sys remains highly robust across substantially different model scales. More importantly, as model capacity increases, \sys exhibits the consistent capability in preserving model accuracy under aggressive NPU-friendly quantization constraints.

\begin{table*}[t]
\centering
\small
\setlength{\tabcolsep}{4pt}
\renewcommand{\arraystretch}{1.15}
\caption{\textbf{Comparison with MobileQuant}: Perplexity (PPL) and HellaSwag accuracy results of 4-bit weight quantization (w4a8, with per-channel weight quantization) on the TinyLlama-1.1B-Chat-v1.0 model.}
\label{tab:mobilequant_results}
\begin{adjustbox}{max width=0.97\textwidth}
\begin{tabular}{c | c c} 
\toprule
Method & PPL(C4) $\downarrow$ & HellaSwag $\uparrow$ \\
\midrule
FP32 & 9.75 & 0.5087 \\
MoblieQuant & 11.95 & 0.4787 \\
\cellcolor{lightred}Quant.npu & \cellcolor{lightred}11.43 & \cellcolor{lightred}0.4893 \\
\bottomrule
\end{tabular}
\end{adjustbox}
\end{table*}

\subsection{Additional Comparison with MobileQuant}

Our original evaluation primarily focused on rotation-based quantization methods, including QuaRot and SpinQuant, since these approaches consistently demonstrate stronger accuracy preservation compared with earlier methods such as SmoothQuant and OmniQuant, upon which MobileQuant~\cite{tan2024mobilequant} is built. To provide a more comprehensive evaluation, we additionally conduct a direct comparison between \sys and MobileQuant under the exact same static NPU-friendly quantization constraints. We conduct this evaluation on the TinyLlama-1.1B-Chat-v1.0 model. The detailed performance metrics are summarized in Table~\ref{tab:mobilequant_results}.

As shown in ~\cref{tab:mobilequant_results}, \sys consistently outperforms MobileQuant across both perplexity and downstream task accuracy. Specifically, MobileQuant achieves a PPL of 11.95 and an accuracy of 0.4787 on HellaSwag, whereas \sys improves the PPL to 11.43 and achieves a higher accuracy of 0.4893 on HellaSwag, substantially narrowing the gap toward the FP32 baseline (PPL: 9.75, HellaSwag: 0.5087). This further highlights the performance advantage of \sys under fully static NPU-friendly quantization constraints.

\begin{table*}[t]
\centering
\small
\setlength{\tabcolsep}{4pt}
\renewcommand{\arraystretch}{1.15}
\caption{Instruction-following evaluation on Llama-3.2-3B-Instruct using AlpacaEval 2.0. Win rates are computed against the FP16 model to isolate the impact of quantization.}
\label{tab:AlpacaEval}
\begin{adjustbox}{max width=0.97\textwidth}
\begin{tabular}{c | c c c} 
\toprule
Model & length\_controlled\_winrate $\uparrow$ & win\_rate $\uparrow$ & avg\_length \\
\midrule

Executorch-w4a8 & 9.09 & 9.59 & 2301 \\
Quant.npu-w4a8 & 24.76 & 22.43 & 2167 \\
\cellcolor{lightred}Quant.npu-w8a8 & \cellcolor{lightred}48.82 & \cellcolor{lightred}53.59 & \cellcolor{lightred}2406 \\
\bottomrule
\end{tabular}
\end{adjustbox}
\end{table*}

\subsection{Instruction-Following Capability Evaluation}

To further evaluate whether \sys preserves instruction-following capability after quantization, we conduct additional experiments on Llama-3.2-3B-Instruct using AlpacaEval 2.0.

Since small models typically exhibit relatively low win rates against the default GPT-4-Turbo reference model in AlpacaEval 2.0, direct comparison against GPT-4-Turbo can obscure the effect of quantization. For a clearer assessment, we instead use the FP16 Llama-3.2-3B-Instruct model as the reference model and compute the win rates of quantized variants against it.

As shown in Table~\ref{tab:AlpacaEval}, \sys-W8A8 achieves a length-controlled win rate of 48.82\%, indicating nearly lossless generation quality compared with the FP16 model. In contrast, the more aggressive W4A8 setting still achieves a 24.76\% length-controlled win rate, demonstrating that \sys maintains reasonable conversational quality even under low-bit constraints.

Notably, \sys significantly outperforms the ExecuTorch-W4A8 baseline, which achieves only a 9.09\% length-controlled win rate. This substantial gap highlights the effectiveness of our method in preserving instruction-following capability under aggressive quantization.

\begin{table*}[t]
\centering
\small
\setlength{\tabcolsep}{4pt}
\renewcommand{\arraystretch}{1.15}
\caption{\textbf{Detailed performance comparison between Quant.npu and ExecuTorch}: Prefill and decode speed, memory footprint, and energy consumption of different LLMs across various datasets under different quantization settings.}
\label{tab:llm_speed_energy}

% ================= (a) =================
\begin{subtable}[t]{\textwidth}
\centering
\caption{Executorch: 4-bit per-block weight quantization}
\begin{adjustbox}{max width=0.97\textwidth}
\begin{tabular}{c c | c c c c | c c c c} 
\toprule
Model & Dataset & Prefill Len & Decode Len & Prefill Speed (TPS) & Decode Speed (TPS) & Peak Mem (MB) & Total Energy (J) & Per-run Energy (J) & Per-run Drain ($\mu$Ah) \\
\midrule

% ===================== Qwen2.5-3B =====================
\multirow{3}{*}{\makecell{Qwen2.5\\-3B-it}}
& HellaSwag    & 64  & 42  & 577.61  & 29.23 & 2217.2 & 137.7972 & 27.5594 & 1800 \\
& Persona-Chat & 536 & 46  & 775.12  & 26.18 & 2219.7 & 307.4400 & 61.4880 & 4000 \\
& DroidTask    & 741 & 3   & 808.10  & 21.06 & 2216.3 & 184.7    & 36.9    & 2400 \\
\midrule

% ===================== Llama3.2-3B-it =====================
\multirow{3}{*}{\makecell{Llama3.2\\-3B-it}}
& HellaSwag    & 64  & 42  & 674.29  & 28.04 & 2435.5 & 339.2928 & 67.8586 & 4400 \\
& Persona-Chat & 536 & 46  & 860.99  & 27.81 & 2433.1 & 415.0440 & 83.0088 & 5400 \\
& DroidTask    & 741 & 3   & 827.60  & 22.68 & 2433.6 & 538.0200 & 67.2525 & 4375 \\
\midrule

% ===================== SmolLM3-1.7B-it =====================
\multirow{3}{*}{\makecell{SmolLM2\\-1.7B-it}}
& HellaSwag    & 64  & 42  & 956.35  & 48.86 & 1119.4 & 185.8464 & 23.2308 & 1500 \\
& Persona-Chat & 536 & 46  & 1329.91 & 44.40 & 1119.1 & 402.9480 & 50.3685 & 3250 \\
& DroidTask    & 741 & 3   & 1343.98 & 37.00 & 1118.6 & 47.2500  & 9.4500  & 600 \\
\midrule

% ===================== Qwen3-1.7B-it =====================
\multirow{3}{*}{\makecell{Qwen3\\-1.7B}}
& HellaSwag    & 64  & 42  & 826.20  & 43.97 & 1564.6 & 232.2000 & 46.4400 & 3000 \\
& Persona-Chat & 536 & 46  & 1213.75 & 42.22 & 1567.9 & 171.2304 & 34.2461 & 2200 \\
& DroidTask    & 741 & 3   & 1177.22 & 34.27 & 1568.4 & 79.2360  & 15.8472 & 1000 \\

\bottomrule
\end{tabular}
\end{adjustbox}
\end{subtable}

\vspace{2mm} % 控制两个子表之间的间距

% ================= (b) =================
\begin{subtable}[t]{\textwidth}
\centering
\caption{Quant.npu: 4-bit per-channel weight quantization}

\begin{adjustbox}{max width=0.97\textwidth}
\begin{tabular}{c c | c c c c | c c c c} 
\toprule
Model & Dataset & Prefill Len & Decode Len & Prefill Speed (TPS) & Decode Speed (TPS) & Peak Mem (MB) & Total Energy (J) & Per-run Energy (J) & Per-run Drain ($\mu$Ah) \\
\midrule

% ===================== Qwen2.5-3B =====================
\multirow{3}{*}{\makecell{Qwen2.5\\-3B-it}}
& HellaSwag    & 64  & 42  & 693.13  & 29.23 & 1876.60 & 136.1510 & 27.2302 & 1780 \\
& Persona-Chat & 536 & 46  & 930.14  & 26.18 & 1877.85 & 293.1382 & 58.6276 & 3815 \\
& DroidTask    & 741 & 3   & 969.72  & 21.06 & 1876.15 & 158.0772 & 31.5815 & 2055 \\
\midrule

% ===================== Llama3.2-3B-it =====================
\multirow{3}{*}{\makecell{Llama3.2\\-3B-it}}
& HellaSwag    & 64  & 42  & 809.15  & 28.04 & 1985.75 & 335.9228 & 67.1846 & 4355 \\
& Persona-Chat & 536 & 46  & 1033.19 & 27.81 & 1984.55 & 399.1120 & 79.8224 & 5195 \\
& DroidTask    & 741 & 3   & 993.12  & 22.68 & 1984.80 & 461.2686 & 57.6586 & 3750 \\
\midrule

% ===================== SmolLM2-1.7B-it =====================
\multirow{3}{*}{\makecell{SmolLM2\\-1.7B-it}}
& HellaSwag    & 64  & 42  & 1147.62 & 48.86 & 994.70 & 183.6942 & 22.9618 & 1485 \\
& Persona-Chat & 536 & 46  & 1595.89 & 44.40 & 994.55 & 386.7214 & 48.3402 & 3120 \\
& DroidTask    & 741 & 3   & 1612.78 & 37.00 & 994.30 & 40.5061  & 8.1012  & 515 \\
\midrule

% ===================== Qwen3-1.7B-it =====================
\multirow{3}{*}{\makecell{Qwen3\\-1.7B}}
& HellaSwag    & 64  & 42  & 991.44  & 43.97 & 1217.30 & 229.2783 & 45.8557 & 2960 \\
& Persona-Chat & 536 & 46  & 1456.50 & 42.22 & 1218.95 & 164.2185 & 32.8437 & 2110 \\
& DroidTask    & 741 & 3   & 1412.66 & 34.27 & 1219.20 & 68.1084  & 13.6217 & 860 \\

\bottomrule
\end{tabular}
\end{adjustbox}
\end{subtable}

\end{table*}

\subsection{Deployment Performance on SM8750 NPU}
While the primary evaluation in ~\cref{End-to-end Latency} focuses on the Qualcomm SM8650 NPU, this section extends our hardware benchmarking to the newer Qualcomm SM8750 NPU. To provide a comprehensive profile of \sys's hardware efficiency, we report the inference speed, memory footprint, and energy consumption, and compare these results with the performance of the slower ExecuTorch-W4A16 on the same hardware.

\noindent \textbf{Experimental Setup.}
We evaluate four representative LLMs: Qwen2.5-3B-Instruct, Llama3.2-3B-Instruct, SmolLM2-1.7B-Instruct, and Qwen3-1.7B. We conduct evaluations on three datasets with distinct workload characteristics: HellaSwag, Persona-Chat, and DroidTask. Detailed results are summarized in ~\cref{tab:llm_speed_energy}.

\noindent \textbf{Inference Speed.}
The acceleration improvements observed on the SM8750 are highly consistent with those on the SM8650. \sys continues to demonstrate substantial speedup benefits, particularly during the prefill stage. For example, on the DroidTask dataset, Qwen2.5-3B-Instruct achieves a prefill throughput of 969.72 tokens/s, while SmolLM2-1.7B-Instruct reaches 1612.78 tokens/s.

\noindent \textbf{Memory Footprint.}
\sys consistently achieves lower memory consumption compared with baseline methods such as ExecuTorch. Although the peak memory footprint of on-device LLM inference is primarily dominated by model weights, \sys further reduces activation memory through lower-bit activation quantization. Specifically, \sys adopts W4A8 quantization, whereas the baseline uses W4A16 quantization. The lower activation precision therefore leads to a reduced overall memory footprint. As shown in ~\cref{tab:llm_speed_energy}, the peak memory usage of 3B models remains around 1876--1985 MB, while 1.7B models require only around 994--1219 MB, demonstrating the efficiency of our deployment scheme.

\noindent \textbf{Energy Consumption.}
Compared with ExecuTorch-W4A16, \sys also achieves lower energy consumption during inference. The energy reduction mainly comes from two factors. First, \sys more efficiently utilizes the computational capability of the underlying NPU hardware. Second, the reduced inference latency, especially during the prefill stage, shortens the duration of high-power execution states. For example, Qwen2.5-3B-Instruct consumes only 27.23 J per run on the HellaSwag dataset, confirming the exceptional suitability of \sys for power-constrained edge deployment.

\section{Limitations and Broader Impacts}
\label{Appendix: Limitations and Broader Impacts}

\noindent \textbf{Limitations}
While \sys significantly advances fully static quantization for mobile NPUs, it has two primary limitations. First, although matrix multiplications, which dominate inference latency, are already implemented in low-bit precision, the framework still retains 16-bit precision for a minor subset of operations (e.g., SiLU activations) to strictly preserve model perfomance. This introduces latency bottlenecks compared to a fully low-bit execution pipeline. Further reducing reliance on 16-bit operations without incurring non-negligible accuracy degradation remains a critical direction for future research. Second, \sys currently does not employ a dedicated strategy for curating the calibration dataset. Since our parameter optimization process is highly sensitive to activation distributions, the quality of the calibration set plays a critical role. Designing principled methods to select more representative calibration data could further improve the performance of quantized models.

\noindent \textbf{Broader Impacts}
The development of \sys addresses a key challenge in deploying large language models on mobile and edge devices, where computational resources and energy budgets are limited. This framework has practical implications for a wide range of real-world applications, including privacy-preserving on-device AI, mobile natural language understanding and generation, and low-latency edge computing. Through its enhanced efficiency and accuracy, \sys opens new possibilities for deploying powerful LLMs in resource-constrained environments, fostering the growth of AI in mobile and edge technologies.

%%%%%%%%%%%%%%%%%%%%%%%%%%%%%%%%%%%%%%%%%%%%%%%%%%%%%%%%%%%%

\end{document}